\definecolor{mydarkblue}{rgb}{0,0.08,0.45}
\newtheorem{thm}{Theorem}
\newtheorem{lem}{Lemma}
\newtheorem{prop}{Proposition}
\newtheorem{definition}{Definition}
\newcommand*{\Prob}{\mathbb{P}}
\newcommand*{\rank}{\text{rank}}
\newcommand*{\Oh}{O}
\newcommand{\normsq}[1]{\| #1 \|_2^2}
\newcommand{\lpnorm}[2]{\| #1 \|_{#2}}
\title{Compressed Factorization: Fast and Accurate \\ Low-Rank Factorization of Compressively-Sensed Data}
\author{Vatsal Sharan \thanks{Equal contribution.} \hspace{20pt} Kai Sheng Tai \footnotemark[1] \hspace{20pt} Peter Bailis \hspace{20pt} Gregory Valiant \vspace{5pt}\\   \texttt{\{vsharan, kst, pbailis, valiant\}@cs.stanford.edu}\vspace{5pt} \\ Stanford University}
\begin{document}

\date{}
\maketitle
\begin{abstract}
What learning algorithms can be run directly on compressively-sensed data?
In this work, we consider the question of accurately and efficiently computing low-rank matrix or tensor factorizations given data compressed via random projections.  
We examine the approach of first performing factorization in the compressed domain, and then reconstructing the original high-dimensional factors from the recovered (compressed) factors.  
In both the matrix and tensor settings, we establish conditions under which this natural approach will provably recover the original factors.  
While it is well-known that random projections preserve a number of geometric properties of a dataset, our work can be viewed as showing that they can also preserve certain solutions of non-convex, NP-Hard problems like non-negative matrix factorization.
We support these theoretical results with experiments on synthetic data and demonstrate the practical applicability of compressed factorization on real-world gene expression and EEG time series datasets.
\end{abstract}

\section{Introduction}

We consider the setting where we are given data that has been compressed via random projections. 
This setting frequently arises when data is acquired via compressive measurements \citep{donoho2006compressed,candes2008introduction}, or when high-dimensional data is projected to lower dimension in order to reduce storage and bandwidth costs \citep{haupt2008compressed,abdulghani2012compressive}. 
In the former case, the use of compressive measurement enables higher throughput in signal acquisition, more compact sensors, and reduced data storage costs \citep{duarte2008single,candes2008introduction}.  
In the latter,  the use of random projections underlies many sketching algorithms for stream processing and distributed data processing applications \citep{cormode2012synopses}.

Due to the computational benefits of working directly in the compressed domain, there has been significant interest in understanding which learning tasks can be performed on compressed data. 
For example, consider the problem of supervised learning on data that is acquired via compressive measurements. 
\citet{calderbank2009compressed} show that it is possible to learn a linear classifier directly on the compressively sensed data with small loss in accuracy, hence avoiding the computational cost of first performing sparse recovery for each input prior to classification.
The problem of learning from compressed data has also been considered for several other learning tasks, such as linear discriminant analysis \citep{durrant2010compressed}, PCA \citep{fowler2009compressive,zhou2011godec,ha2015robust}, and regression \citep{zhou2009compressed,maillard2009compressed,kaban2014new}.



Building off this line of work, we consider the problem of performing low-rank matrix and tensor factorizations directly on compressed data, with the goal of recovering the low-rank factors in the original, uncompressed domain.
Our results are thus relevant to a variety of problems in this setting, including sparse PCA, nonnegative matrix factorization (NMF), and Candecomp/Parafac (CP) tensor decomposition. 
As is standard in compressive sensing, we assume prior knowledge that the underlying factors are \emph{sparse}.

For clarity of exposition, we begin with the matrix factorization setting.
Consider a high-dimensional data matrix $M\in\mathbb{R}^{n \times m}$ that has a rank-$r$ factorization $M = WH$, where $W\in\mathbb{R}^{n\times r}$, $H \in \mathbb{R}^{r \times m}$, and $W$ is sparse. 
We are given the compressed measurements $\tilde{M}=PM$ for a known \emph{measurement matrix} $P\in\mathbb{R}^{d\times n}$, where $d < n$.
Our goal is to approximately recover the original factors $W$ and $H$ given the compressed data $\tilde{M}$ as accurately and efficiently as possible.
\begin{figure}
\centering
\includegraphics[width=0.5\linewidth]{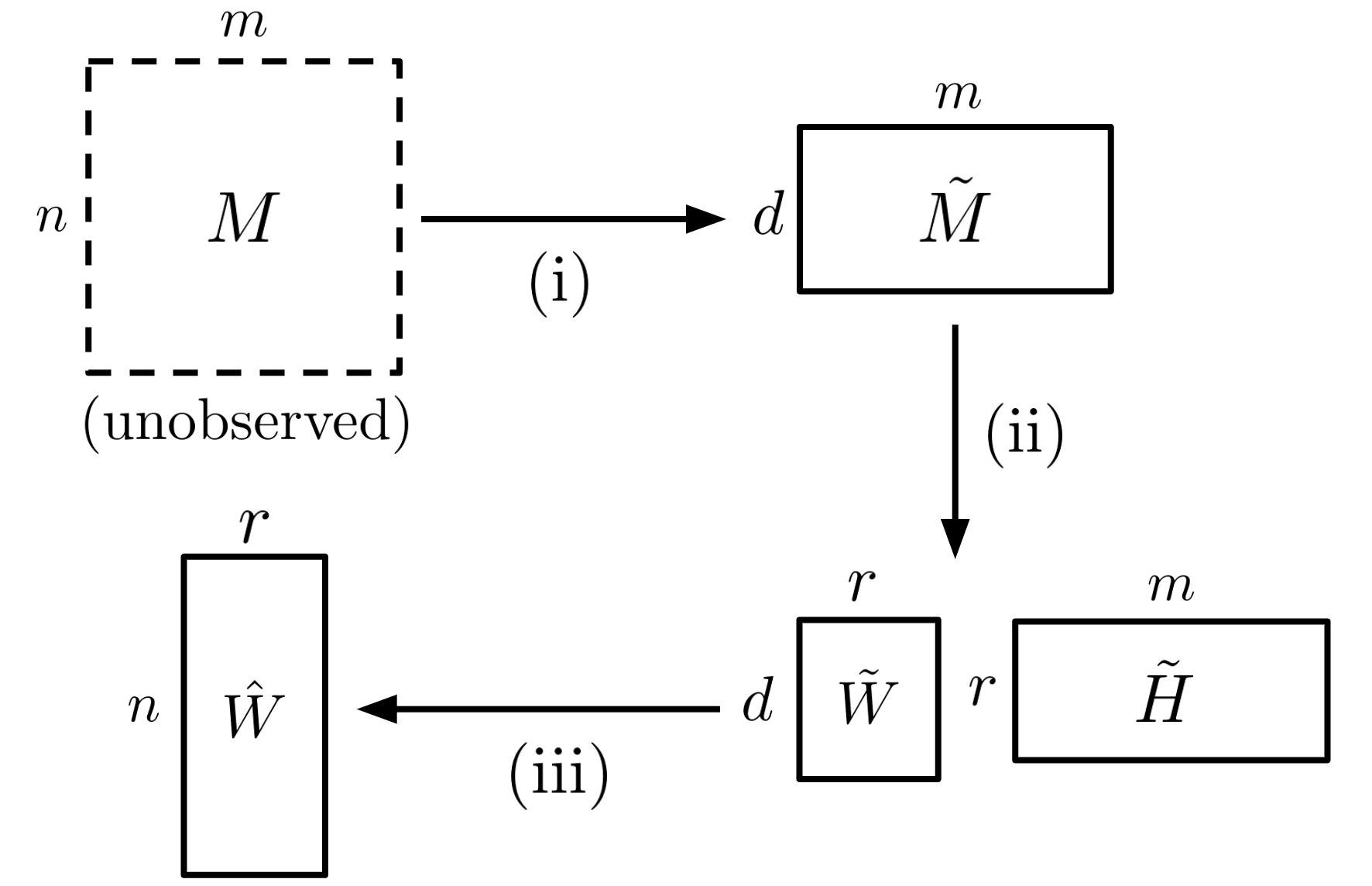}
\caption{Schematic illustration of compressed matrix factorization. (i) The matrix $\tilde{M}$ is a compressed version of the full data matrix $M$. (ii) We directly factorize $\tilde{M}$ to obtain matrices $\tilde{W}$ and $\tilde{H}$. (iii) Finally, we approximate the left factor of $M$ via sparse recovery on each column of $\tilde{W}$.}
\label{fig:overview}
\end{figure}
This setting of compressed data with sparse factors arises in a number of important practical domains.
For example, gene expression levels in a collection of tissue samples can be clustered using NMF to reveal correlations between particular genes and tissue types~\citep{gao2005improving}.
Since gene expression levels in each tissue sample are typically sparse, compressive sensing can be used to achieve more efficient measurement of the expression levels of large numbers of genes in each sample~\citep{parvaresh2008recovering}.
In this setting, each column of the $d\times m$ input matrix $\tilde{M}$ corresponds to the compressed measurements for the $m$ tissue samples, while each column of the matrix $W$ in the desired rank-$r$ factorization corresponds to the pattern of gene expression in each of the $r$ clusters.



We consider the natural approach of performing matrix factorization directly in the compressed domain (Fig.~\ref{fig:overview}): first factorize the compressed matrix $\tilde{M}$ to obtain factors $\tilde{W}$ and $\tilde{H}$, and then approximately recover each column of $W$ from the columns of $\tilde{W}$ using a sparse recovery algorithm that leverages the sparsity of the factors.
We refer to this ``compressed factorization'' method as \textsc{Factorize-Recover}. 
This approach has clear computational benefits over the alternative \textsc{Recover-Factorize} method of first recovering the matrix $M$ from the compressed measurements, and then performing low-rank factorization on the recovered matrix.
In particular, \textsc{Factorize-Recover} requires only $r$ calls to the sparse recovery algorithm, in contrast to $m \gg r$ calls for the alternative. 
This difference is significant in practice, \emph{e.g.} when $m$ is the number of samples and $r$ is a small constant.
Furthermore, we demonstrate empirically that \textsc{Factorize-Recover} also achieves better recovery error in practice on several real-world datasets.

Note that the \textsc{Factorize-Recover} approach is guaranteed to work if the factorization of the compressed matrix $\tilde{M}$ yields the factors $\tilde{W}=PW$ and $\tilde{H}=H$, since we assume that the columns of $W$ are sparse and hence can be recovered from the columns of $\tilde{W}$ using sparse recovery.
Thus, the success of the \textsc{Factorize-Recover} approach depends on finding this particular factorization of $\tilde{M}$.
Since matrix factorizations are not unique in general, we ask:
\emph{under what conditions is it possible to recover the ``correct'' factorization $\tilde{M}=(PW)H$ of the compressed data, from which the original factors can be successfully recovered?}\\

\noindent\textbf{Contributions.\hspace{0.5em}} In this work, we establish conditions under which \textsc{Factorize-Recover} provably succeeds, in both the matrix and tensor factorization domains.  
We complement our theoretical results with experimental validation that demonstrates both the accuracy of the recovered factors, as well as the computational speedup resulting from \textsc{Factorize-Recover} versus the alternative approach of first recovering the data in the original uncompressed domain, and then factorizing the result. 

Our main theoretical guarantee for sparse matrix factorizations, formally stated in Section~\ref{sec:nmf_theory}, provides a simple condition under which the factors of the compressed data are the compressed factors.  While the result is intuitive, the proof is delicate, and involves characterizing the likely sparsity of linear combinations of sparse vectors, exploiting graph theoretic properties of expander graphs. The crucial challenge in the proof is that the columns of $W$ get mixed after projection, and we need to argue that they are still the sparsest vectors in any possible factorization after projection. This mixing of the entries, and the need to argue about the uniqueness of factorizations after projection, makes our setup significantly more involved than, for example, standard compressed sensing.\\

{\noindent\textbf{Theorem~\ref{thm:nmf} (informal). }}\emph{Consider a rank-$r$ matrix $M\in\mathbb{R}^{n \times m}$, where $M = WH$, $W\in\mathbb{R}^{n\times r}$ and $H \in \mathbb{R}^{r \times m}$. Let the columns of $W$ be sparse with the non-zero entries chosen at random.  Given the compressed measurements $\tilde{M}=PM$ for a measurement matrix $P\in\mathbb{R}^{d\times n}$, under suitable conditions on $P,n,m,d$ and the sparsity, $\tilde{M}=(PW)H$ is the} sparsest \emph{rank-$r$ factorization of $\tilde{M}$ with high probability, in which case performing sparse recovery on the columns of $(PW)$ will yield the true factors $W$.}\\

While Theorem~\ref{thm:nmf} provides guarantees on the quality of the sparsest rank-$r$ factorization, it does not directly address the algorithmic question of how to find such a factorization efficiently. For some of the settings of interest, such as sparse PCA, efficient algorithms for recovering this sparsest factorization are known, under some mild assumptions on the data \citep{amini2009learning,zhou2011godec,deshpande2014sparse, papailiopoulos2013sparse}.
In such settings, Theorem~\ref{thm:nmf} guarantees that we can efficiently recover the correct factorization. 

For other matrix factorization problems such as NMF, the current algorithmic understanding of how to recover the factorization is incomplete even for uncompressed data, and guarantees for provable recovery require strong assumptions such as separability \citep{arora2012learning}. As the original problem (computing NMF of the uncompressed matrix $M$) is itself NP-hard \citep{vavasis2009complexity}, hence one should not expect an analog of Theorem~\ref{thm:nmf} to avoid solving a computationally hard problem and guarantee efficient recovery in general.
In practice, however, NMF algorithms are commonly observed to yield sparse factorizations on real-world data~\citep{lee1999learning,hoyer2004non} and there is substantial work on explicitly inducing sparsity via regularized NMF variants~\citep{hoyer2004non,li2001learning,kim2008sparse,peharz2012sparse}. 
In light of this empirically demonstrated ability to compute sparse NMF, Theorem~\ref{thm:nmf} provides theoretical grounding for why \textsc{Factorize-Recover} should yield accurate reconstructions of the original factors. 

Our theoretical results assume a noiseless setting, but real-world data is usually noisy and only approximately sparse.
Thus, we demonstrate the practical applicability of \textsc{Factorize-Recover} through experiments on both synthetic benchmarks as well as several real-world gene expression datasets. 
We find that performing NMF on compressed data achieves reconstruction accuracy comparable to or better than factorizing the recovered (uncompressed) data at a fraction of the computation time.

In addition to our results on matrix factorization, we show the following analog to Theorem~\ref{thm:nmf} for compressed CP tensor decomposition. The proof in this case follows in a relatively straightforward fashion from the techniques developed for our matrix factorization result.\\

\noindent \textbf{Proposition~\ref{prop:td} (informal).} 
\emph{Consider a rank-$r$ tensor $T\in \mathbb{R}^{n\times m_1 \times m_2}$ with factorization $T=\sum_{i=1}^{r} A_i \otimes B_i \otimes C_i$, where $A$ is sparse with the non-zero entries chosen at random. Under suitable conditions on $P$, the dimensions of the tensor, the projection dimension and the sparsity, $\tilde{T}=\sum_{i=1}^{r}(PA_i) \otimes B_i \otimes C_i$ is the unique factorization of the compressed tensor $\tilde{T}$ with high probability, in which case performing sparse recovery on the columns of $(PA)$ will yield the true factors $A$.}\\

As in the case of sparse PCA, there is an efficient algorithm for finding this unique tensor factorization, as tensor decomposition can be computed efficiently when the factors are linearly independent (see e.g. \citet{kolda2009tensor}). 
We empirically validate our approach for tensor decomposition on a real-world EEG dataset, demonstrating that factorizations from compressed measurements can yield interpretable factors that are indicative of the onset of seizures.

\section{Related Work}

There is an enormous body of algorithmic work on computing matrix and tensor decompositions more efficiently using random projections, usually by speeding up the linear algebraic routines that arise in the computation of these factorizations.
This includes work on randomized SVD~\citep{halko2011finding,clarkson2013low}, NMF \citep{wang2010efficient,tepper2016compressed} and CP tensor decomposition \citep{battaglino2017practical}.  
This work is rather different in spirit, as it leverages projections to accelerate certain components of the algorithms, but still requires repeated accesses to the original uncompressed data.
In contrast, our methods apply in the setting where we are only given access to the compressed data.

As mentioned in the introduction, learning from compressed data has been widely studied, yielding strong results for many  learning tasks such as linear classification \citep{calderbank2009compressed,durrant2010compressed}, multi-label prediction \citep{hsu2009multi}  and regression \citep{zhou2009compressed,maillard2009compressed}. 
In most of these settings, the goal is to obtain a good predictive model in the compressed space itself, instead of recovering the model in the original space. 
A notable exception to this is previous work on performing PCA and matrix co-factorization on compressed data \citep{fowler2009compressive,ha2015robust,yoo2011matrix}; we extend this line of work by considering sparse matrix decompositions like sparse PCA and NMF.
To the best of our knowledge, ours is the first work to establish conditions under which sparse matrix factorizations can be recovered directly from compressed data.

Compressive sensing techniques have been extended to reconstruct higher-order signals from compressed data.
For example, Kronecker compressed sensing \citep{duarte2012kronecker} can be used to recover a tensor decomposition model known as Tucker decomposition from compressed data \citep{caiafa2013multidimensional,caiafa2015stable}. Uniqueness results for reconstructing the tensor are also known in certain regimes \citep{sidiropoulos2012multi}. 
Our work extends the class of models and measurement matrices for which uniqueness results are known and additionally provides algorithmic guarantees for efficient recovery under these conditions.


From a technical perspective, the most relevant work is \citet{spielman2012exact}, which considers the sparse coding problem.  
Although their setting differs from ours, the technical cores of both analyses involve characterizing the sparsity patterns of linear combinations of random sparse vectors.

\section{Compressed Factorization}

In this section, we first establish preliminaries on compressive sensing, followed by a description of the measurement matrices used to compress the input data.
Then, we specify the algorithms for compressed matrix and tensor factorization that we study in the remainder of the paper.\\

\noindent\textbf{Notation.}\hspace{0.25em} Let $[n]$ denote the set $\{1,2,\dots,n\}$. For any matrix $A$, we denote its $i$th column as $A_i$. 
For a matrix $P\in\mathbb{R}^{d\times n}$ such that $d < n$, define:
\begin{equation}
\begin{aligned}
& \mathcal{R}_P(w)= \underset{x : Px=w}{\text{argmin}}
 \;\|x\|_1
\end{aligned}\label{eq:basis-pursuit}
\end{equation}
as the sparse recovery operator on $w \in \mathbb{R}^n$. We omit the subscript $P$ when it is clear from context.\\

\noindent\textbf{Background on Compressive Sensing.}\hspace{0.25em}
In the compressive sensing framework, there is a sparse signal $x\in \mathbb{R}^n$ for which we are given $d\ll n$ linear measurements $Px$, where $ P \in \mathbb{R}^{d\times n}$ is a known measurement matrix. 
The goal is to recover $x$ using the measurements $Px$, given the prior knowledge that $x$ is sparse. 
Seminal results in compressive sensing \cite{donoho2006compressed,candes2006near,candes2008restricted} show that if the original solution is $k$-sparse, then it can be exactly recovered from $d=O(k\log n)$ measurements by solving a linear program (LP) of the form (\ref{eq:basis-pursuit}). 
More efficient recovery algorithms than the LP for solving the problem  are also known~\citep{berinde2008practical,indyk2008near,berinde2009sequential}. 
However, these algorithms typically require more measurements in the compressed domain to achieve the same reconstruction accuracy as the LP formulation \citep{berinde2009sequential}.\\

\noindent\textbf{Measurement Matrices.}\hspace{0.25em}
In this work, we  consider sparse, binary measurement (or projection) matrices $P\in\{0,1\}^{d \times n}$ where each column of $P$ has  $p$ non-zero entries chosen uniformly and independently at random.
For our theoretical results, we set $p=O(\log n)$.
Although the first results on compressive sensing only held for dense matrices \cite{donoho2006compressed,candes2008restricted,candes2006near}, subsequent work has shown that sparse, binary matrices can also be used for compressive sensing \cite{berinde2008combining}. 
In particular, Theorem 3 of \citet{berinde2008combining} shows that the recovery procedure in \eqref{eq:basis-pursuit} succeeds with high probability for the class of $P$ we consider if the original signal is $k$-sparse and $d=\Omega(k\log n)$. In practice, sparse binary projection matrices can arise due to physical limitations in sensor design (\emph{e.g.}, where measurements are sparse and can only be performed additively) or in applications of non-adaptive group testing~\citep{indyk2010efficiently}. \\

\noindent\textbf{Low-Rank Matrix Factorization.}\hspace{0.25em} We assume that each sample is an $n$-dimensional column vector in uncompressed form. 
Hence, the uncompressed matrix $M\in \mathbb{R}^{n\times m}$ has $m$ columns corresponding to $m$ samples, and we assume that it has some rank-$r$ factorization: $M=WH$, where ${W} \in \mathbb{R}^{n \times r}$, ${H} \in \mathbb{R}^{r \times m}$, and the columns of $W$ are $k$-sparse. 
We are given the compressed matrix $\tilde{M}=PM$ corresponding to the $d$-dimensional projection $Pv$ for each sample $v\in \mathbb{R}^n$. 
We then compute a low-rank factorization using the following algorithm:

\vspace{0.5em}
\begin{algorithm}
	\DontPrintSemicolon
	\SetAlgoLined
	\SetKwFunction{FDot}{Dot}
	\SetKwFunction{FUpdate}{Update}

	\newcommand\mycommfont[1]{\rmfamily{#1}}
	\SetCommentSty{mycommfont}
	\SetKwComment{Comment}{// }{}
	\SetKwComment{LeftComment}{\Statex // }{}
	\Input{Compressed matrix $\tilde{M}=PM$, projection matrix $P$}{}
    \Alg{Outputs estimates $(\hat{W},\hat{H})$ of $(W,H)$}
    {
    Compute rank-$r$ factorization of $\tilde{M}$ to obtain $\tilde{W},\tilde{H}$\;
    
    Set $\hat{H}\gets \tilde{H}$\;
    
    \For{$1\le i\le r$}
    {
    \Comment{Solve (\ref{eq:basis-pursuit}) to recover $\hat{W}_i$ from $\tilde{W}_i$}
    Set $\hat{W}_i\gets\mathcal{R}(\tilde{W}_i)$\;
    }
    }
	\caption{Compressed Matrix Factorization}
	\label{alg:nmf}
\end{algorithm}
\vspace{0.5em}



	 \noindent\textbf{CP Tensor Decomposition.}\hspace{0.25em}
	 As above, we assume that each sample is $n$-dimensional and $k$-sparse. The samples are now indexed by two coordinates $y\in [m_1]$ and $z\in [m_2]$, and hence can be represented by a tensor $T\in\mathbb{R}^{n\times m_1 \times m_2 }$.
We assume that $T$ has some rank-$r$  factorization $T=\sum_{i=1}^r A_i \otimes B_i \otimes C_i$, where the columns of $A$ are $k$-sparse. 
Here $\otimes$ denotes the outer product: if $a\in \mathbb{R}^n, b \in \mathbb{R}^{m_1}, c \in \mathbb{R}^{m_2}$ then $a\otimes b \otimes c \in \mathbb{R}^{n\times m_1 \times m_2}$ and $(a\otimes b\otimes b)_{ijk}=a_ib_jc_k$. 
This model, CP decomposition, is the most commonly used model of tensor decomposition.  
For a measurement matrix  $P \in \mathbb{R}^{d \times n}$, we are given a projected tensor $\tilde{T}\in \mathbb{R}^{d \times m_1 \times m_2}$ corresponding to a $d$ dimensional projection $Pv$ for each sample $v$. Algorithm \ref{alg:td} computes a low-rank factorization of $T$ from $\tilde{T}$.

\vspace{0.5em}
\begin{algorithm}
	\DontPrintSemicolon
	\SetAlgoLined
	\SetKwFunction{FDot}{Dot}
	\SetKwFunction{FUpdate}{Update}
   
	\newcommand\mycommfont[1]{\rmfamily{#1}}
	\SetCommentSty{mycommfont}
	\SetKwComment{Comment}{// }{}
	\SetKwComment{LeftComment}{\Statex // }{}
	\Input{Compressed tensor $\tilde{T}$, projection matrix $P$ }{}
    \Alg{Outputs estimates $(\hat{A},\hat{B},\hat{C})$ of $(A,B,C)$}{
    Compute rank-$r$ TD of $\tilde{T}$: $\tilde{T}=\sum_{i=1}^r \tilde{A}_i \otimes \tilde{B}_i \otimes \tilde{C}_i$\;
    
    Set $\hat{B}\gets \tilde{B}$, $\hat{C}\gets \tilde{C}$\;
    
    \For{$1\le i\le r$}
    {
    \Comment{Solve (\ref{eq:basis-pursuit}) to recover $\hat{A}_i$ from $\tilde{A}_i$}
    Set $\hat{A}_i\gets\mathcal{R}(\tilde{A}_i)$\;
    }
    
    }
	\caption{Compressed CP Tensor Decomposition}
	\label{alg:td}
\end{algorithm}  
\vspace{0.5em}
     

We now describe our formal results for matrix and tensor factorization.

\section{Theoretical Guarantees}\label{sec:theory}

In this section, we establish conditions under which \textsc{Factorize-Recover} will provably succeed for matrix and tensor decomposition on compressed data.



\subsection{Sparse Matrix Factorization}
\label{sec:nmf_theory}

\vspace{-4pt}
The main idea is to show that with high probability, $\tilde{M}=(PW)H$ is the \emph{sparsest} factorization of $\tilde{M}$ in the following sense: for any other factorization $\tilde{M}={W}'H'$, $W'$ has strictly more non-zero entries than $(PW)$.
It follows that the factorization $(PW)H$ is the optimal solution for a sparse matrix factorization of $\tilde{M}$ that penalizes non-zero entries of $\tilde{W}$.
To show this uniqueness property, we show that the projection matrices satisfy certain structural conditions with high probability, namely that they correspond to adjacency matrices of \emph{bipartite expander} graphs \cite{hoory2006expander}, which we define shortly. We first formally state our theorem:

\begin{thm}\label{thm:nmf}
	Consider a rank-$r$ matrix $M\in \mathbb{R}^{n\times m}$ which has factorization $M=WH,$ for $H\in \mathbb{R}^{r\times m}$ and $W\in \mathbb{R}^{n\times r}$. Assume $H$ has full row rank and $W=B \odot Y$, where $B\in \{0,1\}^{n\times r}$, $Y\in \mathbb{R}^{n\times r}$ and $\odot$ denotes the elementwise product. Let each column of $B$ have $k$ non-zero entries chosen uniformly and independently at random, and each entry of $Y$ be an independent random variable drawn from any continuous distribution.\footnote{For example, a Gaussian distribution, or absolute value of Gaussian in the NMF setting.} Assume $k>C$, where $C$ is a fixed constant. Consider the projection matrix $P\in \{0,1\}^{d\times n}$ where each column of $P$ has $p=O(\log n)$ non-zero entries chosen independently and uniformly at random. Assume $d=\Omega((r+k)\log n)$. Let $\tilde{M}=PM$. Note that $\tilde{M}$ has one possible factorization $\tilde{M}=\tilde{W}H$ where $\tilde{W}=PW$. For some fixed $\beta>0$, with failure probability at most $(r/n)e^{-\beta k} +(1/n^5)$, $\tilde{M}=\tilde{W}H$ is the sparsest possible factorization in terms of the left factors: for any other rank-$r$ factorization $\tilde{M}={W'}{H'}$, $\lpnorm{\tilde{W}}{0}<\lpnorm{{W'}}{0}$.
\end{thm}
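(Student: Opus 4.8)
The plan is to reduce the theorem to a statement about linear combinations of the projected columns $\tilde W_i = PW_i$, and then to show, using expansion, that any ``genuine'' combination is strictly denser than a single projected column. First I would exploit that $H$ has full row rank $r$: with probability $1$ the factor $W$ (sparse, with continuous values) and hence $\tilde W = PW$ have rank $r$, so $\mathrm{col}(\tilde M) = \mathrm{col}(\tilde W)$ is exactly $r$-dimensional. Consequently every rank-$r$ factorization $\tilde M = W'H'$ has $\mathrm{col}(W') = \mathrm{col}(\tilde W)$, so $W' = \tilde W G$ for some invertible $G \in \Real^{r\times r}$, and $\lpnorm{W'}{0} = \lpnorm{\tilde W G}{0}$. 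It therefore suffices to prove that for every invertible $G$ that is not a permutation-times-diagonal (scaled permutation) matrix, $\lpnorm{\tilde W G}{0} > \lpnorm{\tilde W}{0}$; scaled permutations merely relabel and rescale the factors and are excluded by ``other.''

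Next I would set up the combinatorial accounting. Each column of $W'$ equals $\tilde W g = \sum_{i\in T} g_i \tilde W_i$, where $T = \mathrm{supp}(g)$. Call a column \emph{pure} if $|T|=1$ and \emph{mixed} if $|T|\ge 2$. Invertibility forces distinct pure columns to use distinct original indices, and if $G$ is not a scaled permutation then at least one column is mixed. Writing $\mathcal P,\mathcal M$ for the pure and mixed columns and $I_{\mathcal P}$ for the indices used by the pure columns, a short computation gives $\lpnorm{W'}{0} - \lpnorm{\tilde W}{0} = \sum_{j\in\mathcal M}\lpnorm{W'_j}{0} - \sum_{i\notin I_{\mathcal P}} \lpnorm{\tilde W_i}{0}$, where the two sums have the same number $|\mathcal M|$ of terms. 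Since each $\lpnorm{\tilde W_i}{0}\le pk$, it suffices to establish the core claim: every mixed column satisfies $\lpnorm{\tilde W g}{0} > pk$.

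For the core claim I would invoke the expander structure. With failure probability at most $1/n^5$ the bipartite graph of $P$ is a lossless $(\epsilon,K)$-expander for a small constant $\epsilon$ and threshold $K = \Omega(r+k)$ (standard for $p = O(\log n)$ and $d = \Omega((r+k)\log n)$, as in \citet{berinde2008combining}), which in particular guarantees that every left-set $S$ with $|S|\le K$ has at least $(1-2\epsilon)p|S|$ unique neighbors. Over the randomness of $B$, with failure probability $(r/n)e^{-\beta k}$ I would control the overlaps among the random supports $\mathrm{supp}(W_i)$ so that each column keeps at least $(1-\epsilon)k$ ``private'' coordinates. Now take $v = \sum_{i\in T}g_i W_i$ with $|T|\ge 2$ and let $L = \mathrm{supp}(v)$; since the entries of $Y$ are continuous there is no cancellation at private coordinates, so $|L|\ge (1+\gamma)k$ for some $\gamma>0$. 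If $|L|\le K$, every unique neighbor $\rho$ of $L$ is adjacent to a single $\ell\in L$ with $v_\ell\neq 0$, whence $(Pv)_\rho = v_\ell \ne 0$ and $\lpnorm{\tilde W g}{0}=\lpnorm{Pv}{0}\ge (1-2\epsilon)p|L| > pk$ for small $\epsilon$. This is the clean part.

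The hard part will be mixed columns whose combined left-support $|L|$ exceeds the expansion threshold $K$ — equivalently, combinations of many original columns. There the unique-neighbor count for the full set $L$ is no longer available, and a naive private-row lower bound degrades: once $rpk$ is large compared to $d$, the projected supports of the $\tilde W_i$ overlap heavily, few rows are hit by a single column, and adversarially chosen coefficients $g$ could in principle cancel many coordinates. Resolving this uniformly over the coefficients is precisely the delicacy the paper flags. I expect to handle it by a union bound over the at most $2^r$ choices of $T$ together with a careful expander argument restricted to size-$K$ subsets of $L$ that isolates coordinates immune to cancellation, using the random support model (and the $e^{-\beta k}$ tail) to ensure enough such coordinates survive — the same circle of ideas used to analyze sparsity patterns of combinations of random sparse vectors in \citet{spielman2012exact}. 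Assembling the clean and delicate cases into the core claim $\lpnorm{\tilde W g}{0} > pk$ and substituting into the accounting identity then yields $\lpnorm{\tilde W}{0} < \lpnorm{W'}{0}$ with the stated failure probability.
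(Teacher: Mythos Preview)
Your reduction is right and matches the paper: since $H$ has full row rank, $\mathrm{col}(W')=\mathrm{col}(\tilde W)$, and it suffices to show that every mixed combination $\tilde Wg$ with $|\mathrm{supp}(g)|\ge 2$ has strictly more than $pk$ nonzeros. The gap is in how you prove the core claim. Your unique-neighbor argument applies $P$'s expansion to $L=\mathrm{supp}\bigl(\sum_{i\in T}g_iW_i\bigr)\subset[n]$, and this only works when $|L|$ lies below the expansion threshold $K=\Theta(r+k)$; but $|L|$ can be as large as $|T|\,k\le rk$, so unless $r$ or $k$ is a constant your ``hard case'' is in fact the generic one. The proposed fix---a $2^r$ union bound together with Spielman--Wang-style arguments restricted to size-$K$ sub-supports---does not obviously close the gap: the adversary chooses $g$ after all randomness is revealed, so for each fixed $T$ you face a continuum of coefficients, and isolating well-behaved sub-supports of $L$ does not by itself rule out global cancellation in $Pv$. (The ``private coordinate'' step has a related weakness: it implicitly assumes $rk\ll n$, which the theorem does not.)

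The paper sidesteps the case split by working in the projected space from the start and by a different anti-cancellation device. For a column set $S$, let $N(S)\subset[d]$ be the rows where some column of $\tilde W$ restricted to $S$ is nonzero. First, the cascade of the two random bipartite expanders $B$ and $P$ gives $|N(S)|\ge\min\{16|S|kp/25,\,d/200\}$ with the stated failure probability; this replaces your private-coordinate count and has no threshold issue. Second---and this is the idea you are missing---the rows of the $N(S)\times S$ submatrix are partitioned into groups of size at most $p$, each group being the set of rows containing a common scalar $Y_{ab}$ (since each column of $P$ has $p$ ones, each $Y_{ab}$ appears in at most $p$ rows of $\tilde W$). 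An induction over groups then shows that, with probability one over $Y$, the right null space of any $|S|$ groups contains no fully dense vector: each new group contributes a fresh continuous variable, so a fully dense $x$ almost surely fails to be orthogonal to one of its rows, and the null-space dimension drops by one. Hence any fully supported combination can vanish on at most $|S|p$ of the $|N(S)|$ rows, giving $\lpnorm{\tilde Wg}{0}\ge |N(S)|-|S|p\ge 6kp/5>pk$ uniformly in $|S|\ge 2$ and in the coefficients---no small/large split and no union bound over $g$.
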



\vspace{-4pt}
Theorem~\ref{thm:nmf} shows that if the columns of $W$ are $k$-sparse, then projecting into $\Omega((r+k)\log n)$ dimensions preserves uniqueness, with failure probability at most $(r/n)e^{-\beta k}+(1/n)^5$, for some constant $\beta>0$. As real-world matrices have been empirically observed to be typically close to low rank, the $(r/n)$ term is usually small for practical applications. Note that the requirement for the projection dimension being at least $\Omega((r+k)\log n)$ is close to optimal, as even being able to uniquely recover a $k$-sparse $n$-dimensional vector $x$ from its projection $Px$ requires the projection dimension to be at least $\Omega(k\log n)$; we also cannot hope for uniqueness for projections to dimensions below the rank $r$. We also remark that the distributional assumptions on $P$ and $W$ are quite mild, as any continuous distribution suffices for the non-zero entries of $W$, and the condition on the set of non-zero coordinates for $P$ and $W$ being chosen uniformly and independently for each column can be replaced by a deterministic condition that $P$ and $W$ are adjacency matrices of bipartite expander graphs.
We provide a proof sketch below, with the full proof deferred to the Appendix.\\

\noindent {\emph{Proof sketch. }} We first show a simple Lemma that for any other factorization $\tilde{M}=W'H'$, the column space of $W'$ and $\tilde{W}$ must be the same (Lemma 5 
 in the Appendix). Using this, for any other factorization $\tilde{M}=W'H'$, the columns of $W'$ must lie in the column space of $\tilde{W}$, and hence  our goal will be to prove that the columns of $\tilde{W}$ are the sparsest vectors in the column space of $\tilde{W}$, which implies that for any other  factorization $\tilde{M}={W'}{H'}$, $\lpnorm{\tilde{W}}{0}<\lpnorm{{W'}}{0}$. 

The outline of the proof is as follows. It is helpful to think of the matrix $\tilde{W}\in \mathbb{R}^{d \times r}$ as corresponding to the adjacency matrix of an unweighted bipartite graph $G$ with $r$ nodes on the left part $U_1$ and $d$ nodes on the right part $U_2$, and an edge from a node $u\in U_1$ to a node $v\in U_2$ if the corresponding entry of $\tilde{W}$ is non-zero. For any subset $S$ of the columns of $\tilde{W}$, define $N(S)$ to be the subset of the rows of $\tilde{W}$ which have a non-zero entry in at least one of the columns in $S$. In the graph representation $G$, $N(S)$ is simply the neighborhood of a subset $S$ of vertices in the left part $U_1$. In part (a) we argue that the if we take any subset $S$ of the columns of $\tilde{W}$, $|N(S)|$ will be large. This implies that taking a linear combination of all the $S$ columns will result in a vector with a large number of non-zero entries---unless the non-zero entries cancel in many of the columns. In part (b), by using the properties of the projection matrix $P$ and the fact that the non-zero entries of the original matrix $W$ are drawn from a continuous distribution, we show this happens with zero probability.

The property of the projection matrix that is key to our proof is that it is the adjacency matrix of a bipartite expander graph, defined below.

\begin{definition}
	Consider a bipartite graph $R$ with $n$ nodes on the left part and $d$ nodes on the right part such that every node in the left part has degree $p$. We call $R$ a $(\gamma n,\alpha)$ expander if every subset of at most $t\le \gamma n$ nodes in the left part has at least $\alpha tp$ neighbors in the right part.
\end{definition}
\vspace{-4pt}
 It is well-known that adjacency matrices of random bipartite graphs have good expansion properties under suitable conditions \citep{vadhan2012pseudorandomness}. For completeness, we show in Lemma 6 
 in the Appendix that a randomly chosen matrix $P$ with $p$ non-zero entries per column is the adjacency matrix of a $(\gamma n, 4/5)$ expander for $\gamma n=d/(pe^5)$ with failure probability $(1/n^5)$, if $p=O(\log n)$. 
 Note that part (a) is a requirement on the graph $G$ for the matrix $\tilde{W}$ being a bipartite expander. In order to show that $G$ is a bipartite expander, we show that  with high probability $P$ is a bipartite expander, and the matrix $B$ corresponding to the non-zero entries of $W$ is also a bipartite expander. $G$ is a cascade of these bipartite expanders, and hence is also a bipartite expander.  
 
 For part (b), we need to deal with the fact that the entries of $\tilde{W}$ are no longer independent because the projection step leads to each entry of $\tilde{W}$ being the sum of multiple entries of $W$. However, the structure of $P$ lets us control the dependencies, as each entry of $W$ appears at most $p$ times in $\tilde{W}$. Note that for a linear combination of any subset of $S$ columns, $|N(S)|$ rows have non-zero entries in at least one of the $S$ columns, and $|N(S)|$ is large by part (a). Since each entry of $W$ appears at most $p$ times in $\tilde{W}$, we can show that with high probability at most $|S|p$ out of the $|N(S)|$ rows with non-zero entries are zeroed out in any linear combination of the $S$ columns. Therefore, if $|N(S)|-|S|p$ is large enough, then any linear combination of $S$ columns has a large number of non-zero entries and is not sparse. This implies that the columns of $\tilde{W}$ are the sparsest columns in its column space. \hfill $\blacksquare$\\


A natural direction of future work is to relax some of the assumptions of Theorem \ref{thm:nmf}, such as requiring independence between the entries of the $B$ and $Y$ matrices, and among the entries of the matrices themselves. It would also be interesting to show a similar uniqueness result under weaker, deterministic conditions on the left factor matrix $W$ and the projection matrix $P$. Our result is a step in this direction and shows uniqueness if the non-zero entries of $W$ and $P$ are adjacency matrices of bipartite expanders, but it would be interesting to prove this under more relaxed assumptions.




\subsection{Tensor Decomposition}
\label{sec:tensor-theory}

It is easy to show uniqueness for tensor decomposition after random projection since tensor decomposition is unique under mild conditions on the factors~\citep{kruskalthree,kolda2009tensor}. 
Formally: 


\begin{prop}\label{prop:td}
	Consider a rank-$r$ tensor $T\in \mathbb{R}^{n\times m_1 \times m_2}$ which has factorization $T=\sum_{i=1}^{r} A_i \otimes B_i \otimes C_i$, for $A\in \mathbb{R}^{n\times r}$, $B\in \mathbb{R}^{m_1\times r}$ and $C\in \mathbb{R}^{m_1\times r}$. Assume $B$ and $C$ have full column rank and $A=X \odot Y$, where each column of $X$ has exactly $k$ non-zero entries chosen uniformly and independently at random, and each entry of $Y$ is an independent random variable drawn from any continuous distribution. Assume $k>C$, where $C$ is a fixed constant. Consider a projection matrix $P\in \{0,1\}^{d\times n}$ with $d=\Omega((r+k)\log n)$ where each column of $P$ has exactly $p=O(\log n)$ non-zero entries chosen independently and uniformly at random. Let $\tilde{T}$ be the projection of $T$ obtained by projecting the first dimension. Note that $\tilde{T}$ has one possible factorization $\tilde{T}=\sum_{i=1}^{r}(PA_i) \otimes B_i \otimes C_i$. For a fixed $\beta>0$, with failure probability at most $(r/n)e^{-\beta k} +(1/n^5)$, $PA$ has full column rank, and hence this is a unique factorization of $\tilde{T}$.
\end{prop}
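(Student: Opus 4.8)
The plan is to separate the statement into its two assertions and dispatch them in order: (i) that the projected factor $PA$ has full column rank with failure probability at most $(r/n)e^{-\beta k} + 1/n^5$, and (ii) that once all three factor matrices have full column rank, the decomposition $\tilde T = \sum_{i=1}^r (PA_i)\otimes B_i \otimes C_i$ is the unique rank-$r$ CP decomposition of $\tilde T$. Assertion (ii) is classical and carries no probabilistic content, so essentially all of the work — and all of the randomness — lives in assertion (i), which I would obtain by reusing, almost verbatim, the expander-based analysis already developed for Theorem~\ref{thm:nmf}.

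For (i), note that the hypotheses on $A = X \odot Y$ are identical to those placed on $W = B \odot Y$ in Theorem~\ref{thm:nmf}: the sparsity pattern $X$ has $k$ random nonzeros per column and the nonzero values are drawn from a continuous distribution. I would therefore argue that $PA$ has full column rank by showing directly that no nontrivial linear combination of its columns vanishes. Fixing a nonempty $S \subseteq [r]$ and coefficients $v_i \neq 0$ for $i \in S$, it suffices to show $P\big(\sum_{i\in S} v_i A_i\big) \neq 0$. This is exactly the event controlled by parts (a) and (b) of the proof of Theorem~\ref{thm:nmf}: part (a) uses the expansion of the cascade of the expander $P$ with the expander sparsity pattern $X$ to guarantee that the set $N(S)$ of rows of $PA$ that are nonzero in some column of $S$ satisfies $|N(S)| > |S|p$, while part (b) uses the continuity of the entries of $Y$, together with the fact that each entry of $A$ appears in at most $p$ rows of $PA$, to show that at most $|S|p$ of these rows can be annihilated in any linear combination. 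Hence at least one coordinate of $\sum_{i\in S} v_i PA_i$ survives and the combination is nonzero. Since I only require a single surviving coordinate rather than the stronger ``strictly more nonzeros than any single column'' conclusion of Theorem~\ref{thm:nmf}, the same expansion parameters and the same conditions $d = \Omega((r+k)\log n)$, $p = O(\log n)$ suffice, and the failure probability is inherited unchanged as $(r/n)e^{-\beta k} + 1/n^5$.

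With $PA$ of full column rank established, assertion (ii) follows from Kruskal's uniqueness theorem. By hypothesis $B$ and $C$ have full column rank, and we have just shown the same for $PA$, so all three factor matrices have Kruskal rank $r$. For $r \geq 2$ their Kruskal ranks sum to $3r \geq 2r + 2$, meeting Kruskal's condition and forcing the CP decomposition to be unique up to the unavoidable scaling and permutation of the rank-one terms; the case $r = 1$ is immediate. Uniqueness of the factorization means that any rank-$r$ decomposition algorithm applied to $\tilde T$ recovers $(PA, B, C)$, after which sparse recovery on each column of $PA$ returns $A$ exactly as in the matrix setting.

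I expect the main obstacle to be entirely contained in assertion (i), and specifically in the same place it arises for Theorem~\ref{thm:nmf}: controlling the statistical dependencies that the projection introduces among the entries of $PA$ (part (b)) and verifying that every column subset $S$, whose combined support has size at most $rk$, still falls in a regime where the cascade expansion yields $|N(S)| > |S|p$ — which is precisely what the assumption $d = \Omega((r+k)\log n)$ secures. Beyond importing this argument, the tensor case introduces no genuinely new difficulty, since the passage from full column rank to uniqueness is standard; this is why the proposition follows in a relatively straightforward fashion from the matrix analysis.
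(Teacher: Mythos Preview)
Your proposal is correct and matches the paper's approach: both reduce the proposition to showing that $PA$ has full column rank by importing the expander-based analysis from Theorem~\ref{thm:nmf}, and then invoke Kruskal's uniqueness theorem given that $B$, $C$, and $PA$ all have full column rank. The only cosmetic difference is that the paper phrases the linear-independence step as ``the columns of $PA$ are the sparsest vectors in their column space, hence no nontrivial combination can vanish,'' whereas you unpack parts (a) and (b) directly to exhibit a surviving coordinate; these are the same argument at two levels of abstraction.
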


Note that efficient algorithms are known for recovering tensors with linearly independent factors \citep{kolda2009tensor} and hence under the conditions of Proposition \ref{prop:td} we can efficiently find the factorization in the compressed domain from which the original factors can be recovered. In the Appendix, we also show that we can provably recover factorizations in the compressed space using variants of the popular alternating least squares algorithm for tensor decomposition, though these algorithms require stronger assumptions on the tensor such as incoherence. 

The proof of Proposition \ref{prop:td} is direct given the results established in Theorem \ref{thm:nmf}. We use the fact that tensors have a unique decomposition whenever the underlying factors $(PA),B,C$ are full column rank \citep{kruskalthree}. By our assumption, $B$ and $C$ are given to be full rank. The key step is that by the proof of Theorem \ref{thm:nmf}, the columns of $PA$ are the sparsest columns in their column space. Therefore, they must be linearly independent, as otherwise the all zero vector will lie in their column space. Therefore, $(PA)$ has full column rank, and Proposition \ref{prop:td} follows.

\section{Experiments}
\label{sec:experiments}

We support our theoretical uniqueness results with experiments on real and synthetic data. 
On synthetically generated matrices where the ground-truth factorizations are known, we show that standard algorithms for computing sparse PCA and NMF converge to the desired solutions in the compressed space~(\S\ref{sec:experiments-synthetic}). 
We then demonstrate the practical applicability of compressed factorization with experiments on gene expression data~(\S\ref{sec:experiments-nmf}) and EEG time series~(\S\ref{sec:experiments-td}).

\subsection{Synthetic Data}
\label{sec:experiments-synthetic}

\begin{figure*}[]
\includegraphics[width=0.463\textwidth]{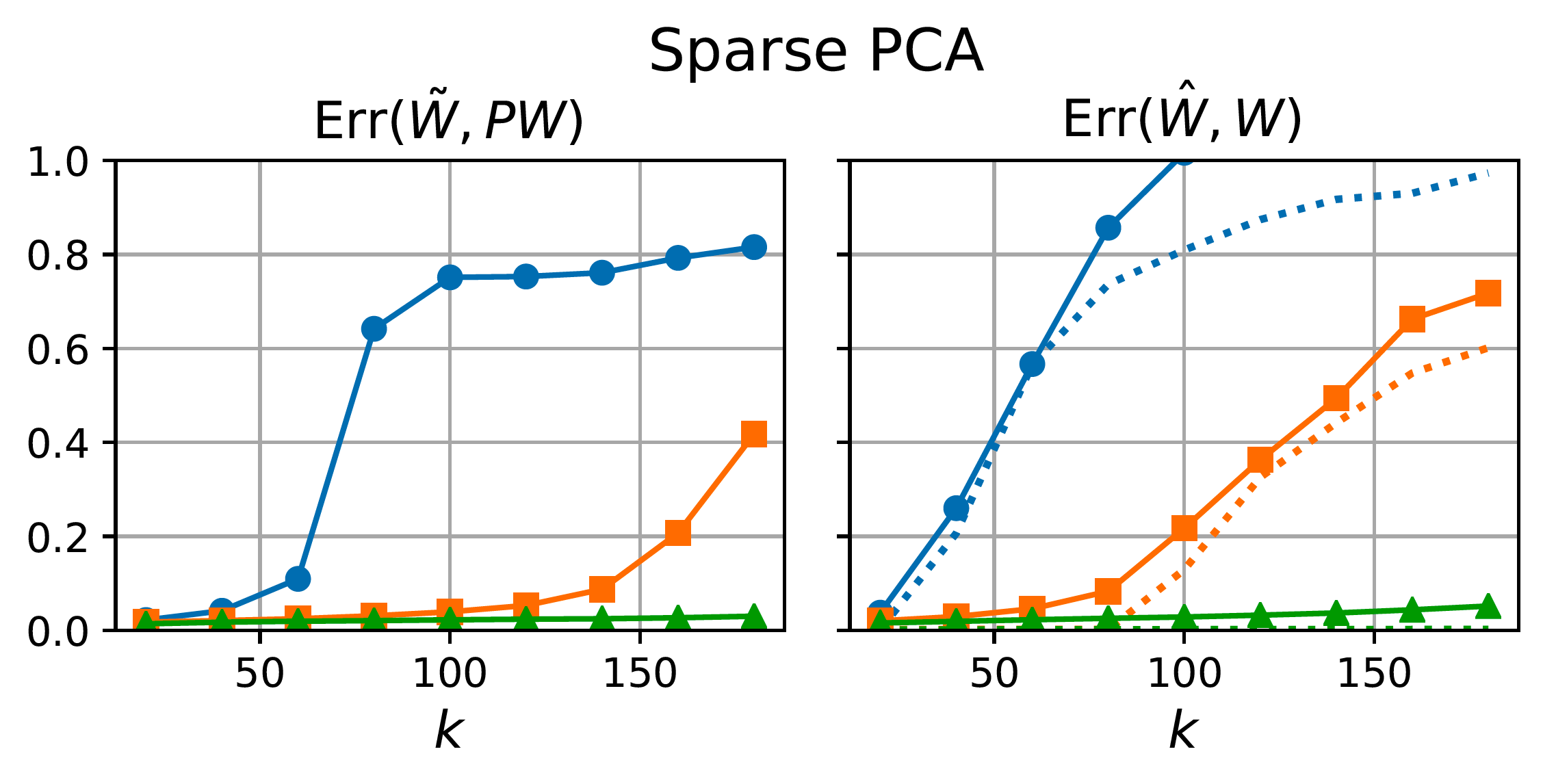}
\includegraphics[width=0.537\textwidth]{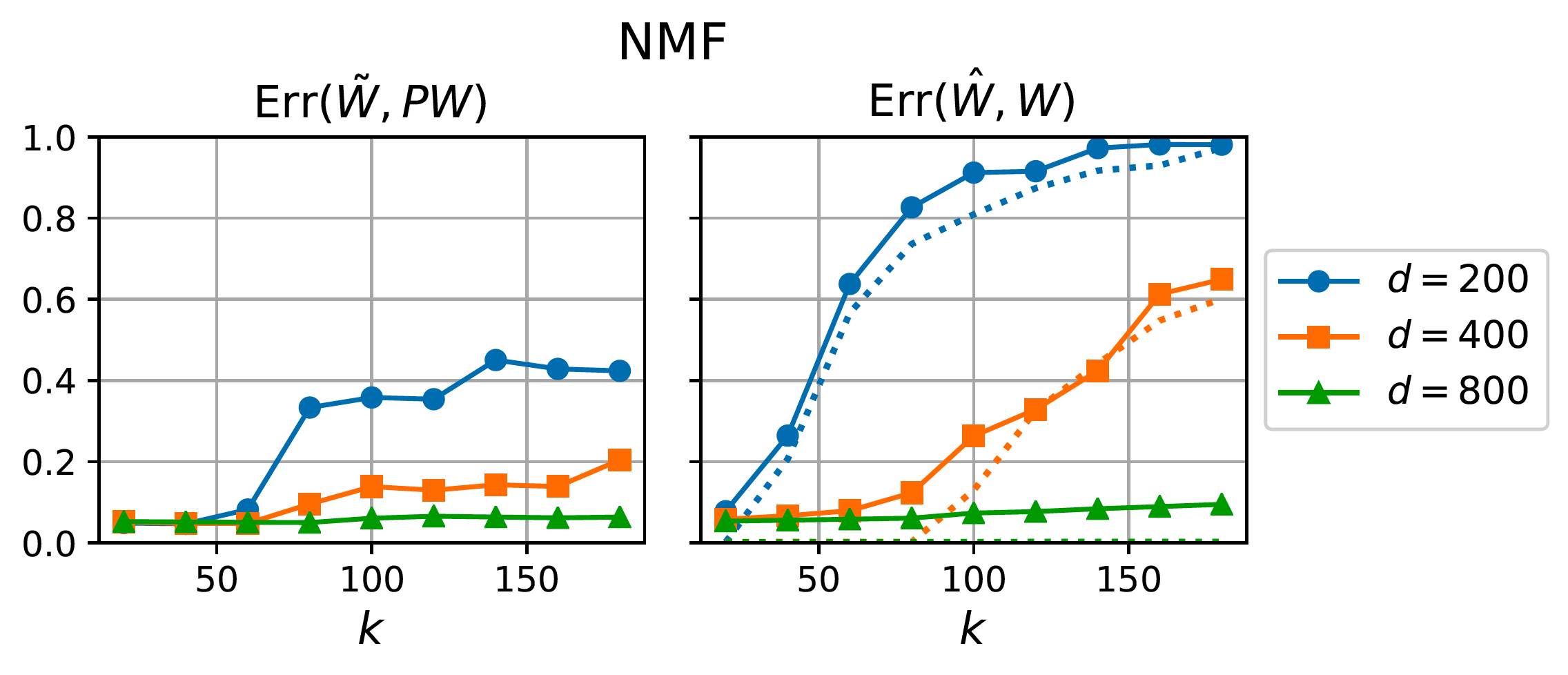}
\caption{Approximation errors $\mathrm{Err}(X, X_*) \coloneqq \|X - X_*\|_F / \|X_*\|_F$ for sparse PCA and NMF on synthetic data with varying column sparsity $k$ of $W$ and projection dimension $d$.
The values of $d$ correspond to $10\times$, $5\times$, and $2.5\times$ compression respectively.
$\mathrm{Err}(\tilde{W}, PW)$ measures the distance between factors in the compressed domain: low error here is necessary for accurate sparse recovery.
$\mathrm{Err}(\hat{W}, W)$ measures the error after sparse recovery: the recovered factors $\hat{W}$ typically incur only slightly higher error than the oracle lower bound (dotted lines) where $PW$ is known exactly.
}
\label{fig:synthetic}
\end{figure*}

We provide empirical evidence that standard algorithms for sparse PCA and NMF converge in practice to the desired sparse factorization $\tilde{M} = (PW)H$---in order to achieve accurate sparse recovery in the subsequent step, it is necessary that the compressed factor $\tilde{W}$ be a good approximation of $PW$.
For sparse PCA, we use alternating minimization with LARS \citep{zou2006sparse}, and for NMF, we use projected gradient descent \citep{lin2007projected}.\footnote{For sparse PCA, we report results for the setting of the $\ell_1$ regularization parameter that yielded the lowest approximation error. We did not use an $\ell_1$ penalty for NMF. We give additional details in the Appendix.}
Additionally, we evaluate the quality of the factors obtained after sparse recovery by measuring the approximation error of the recovered factors $\hat{W}$ relative to the true factors $W$.

We generate synthetic data following the conditions of Theorem 1. For sparse PCA, we sample matrices $W = B \odot Y$ and $H$, where each column of $B \in \{0,1\}^{n\times r}$ has $k$ non-zero entries chosen uniformly at random, $Y_{ij}\overset{\text{iid}}{\sim} N(0, 1)$, and $H_{ij}\overset{\text{iid}}{\sim} N(0, 1)$. For NMF, an elementwise absolute value function is applied to the values sampled from this distribution. 
The noisy data matrix is $M = WH + \mathcal{E}$, where the noise term $\mathcal{E}$ is a dense random Gaussian matrix scaled such that $\|\mathcal{E}\|_F / \|WH\|_F = 0.1$. We observe $\tilde{M} = PM$, where $P$ has $p=5$ non-zero entries per column (in the Appendix, 
we study the effect of varying $p$ on the error).

Figure~\ref{fig:synthetic} shows our results on synthetic data with $m=2000$, $n=2000$, and $r=10$. For small column sparsities $k$ relative to the projection dimension $d$, the estimated compressed left factors $\tilde{W}$ are good approximations to the desired solutions $PW$. Encouragingly, we find that the recovered solutions $\hat{W}=\mathcal{R}(\tilde{W})$ are typically only slightly worse in approximation error than $\mathcal{R}(PW)$, the solution recovered when the projection of $W$ is known exactly.
Thus, we perform almost as well as the idealized setting where we are \emph{given} the correct factorization $(PW)H$.

\vspace{-0pt}
\subsection{NMF on Gene Expression Data}
\label{sec:experiments-nmf}

\begin{table*}
  \caption{Summary of DNA microarray gene expression datasets, along with runtime (seconds) for each stage of the NMF pipeline on compressed data. \textsc{Factorize-Recover} runs only $r$ instances of sparse recovery, as opposed to the $m$ instances used by the alternative, \textsc{Recover-Factorize}.}
  \label{table:gene-timing}
  \centering
  {
  \begin{tabular}{@{}rccrrcrr@{}}
  \toprule
  \multirow{2}{*}{Dataset} &
\multirow{2}{*}{\# Samples} &
\multirow{2}{*}{\# Features} &
   \multicolumn{2}{c}{\textsc{Recover-Fac.}} & \phantom{a} & \multicolumn{2}{c}{\textsc{Fac.-Recover}} \\
   \cmidrule(l){4-5} \cmidrule(l){7-8}
     & &     & Recovery & NMF && NMF & Recovery \\
  \midrule
  CNS tumors & 266 & 7,129 & 76.1 & 2.7 && 0.6 & 5.4 \\
  Lung carcinomas & 203 & 12,600 & 78.8 & 4.0 && 0.8 & 9.3 \\
  Leukemia & 435 & 54,675 & 878.4 & 39.6 && 6.9 & 55.0 \\
  \bottomrule
  \end{tabular}
  }
\end{table*}

\begin{figure*}
	\centering
	\includegraphics[width=\textwidth]{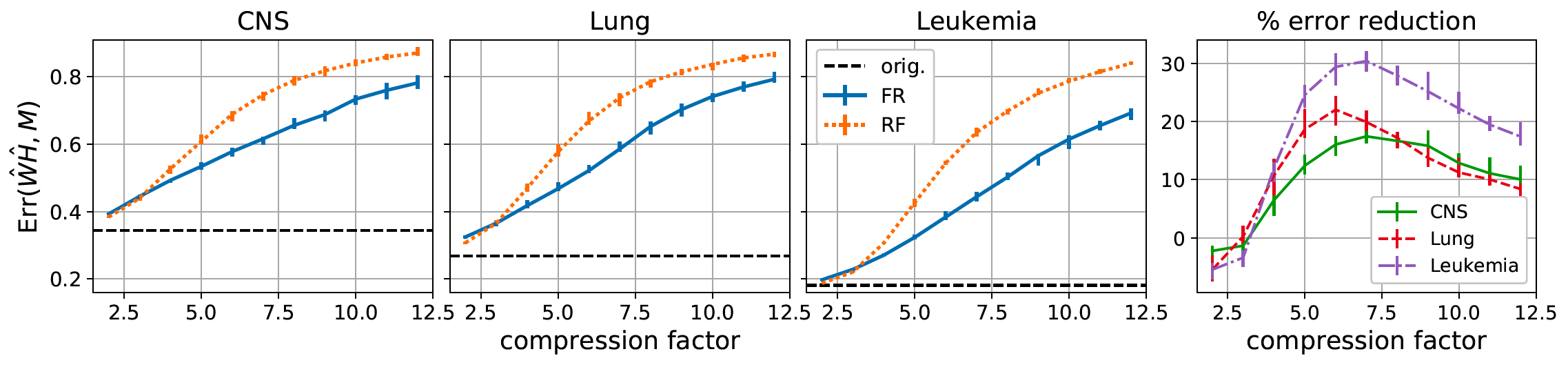}
	\caption{Normalized reconstruction errors $\|\hat{W}\hat{H} - M\|_F / \|M\|_F$ for NMF on gene expression data with varying compression factors $n/d$. \textbf{FR} (blue, solid) is \textsc{Factorize-Recover}, \textbf{RF} (orange, dotted) is \textsc{Recover-Factorize}. The horizontal dashed line is the error when $M$ is decomposed in the original space. Perhaps surprisingly, when $n/d > 3$, we observe a reduction in reconstruction error when compressed data is first factorized. See the text for further discussion.}
	\label{fig:gene-plot}
\end{figure*}
NMF is a commonly-used method for clustering gene expression data, yielding interpretable factors in practice \citep{gao2005improving,kim2007sparse}. 
In the same domain, compressive sensing techniques have emerged as a promising approach for efficiently measuring the (sparse) expression levels of thousands of genes using compact measurement devices \citep{parvaresh2008recovering,dai2008compressive,cleary2017efficient}.\footnote{
The measurement matrices for these devices can be modeled as sparse binary matrices since each dimension of the acquired signal corresponds to the measurement of a small set of gene expression levels.
}
We evaluated our proposed NMF approach on gene expression datasets targeting three disease classes: embryonal central nervous system tumors \citep{pomeroy2002prediction}, lung carcinomas \citep{bhattacharjee2001classification}, and leukemia \citep{mills2009microarray} (Table~\ref{table:gene-timing}). Each dataset is represented as a real-valued matrix where the $i$th row denotes expression levels for the $i$th gene across each sample.\\

\noindent\textbf{Experimental Setup.}\hspace{0.25em} For all datasets, we fixed rank $r=10$ following previous clustering analyses in this domain \citep{gao2005improving,kim2007sparse}. 
For each data matrix $M\in\mathbb{R}^{n\times m}$, we simulated compressed measurements $\tilde{M}\in\mathbb{R}^{d \times m}$ by projecting the feature dimension: $\tilde{M} = PM$.
We ran projected gradient descent \citep{lin2007projected} for 250 iterations, which was sufficient for convergence.\\


\noindent\textbf{Computation Time.}\hspace{0.25em} Computation time for NMF on all 3 datasets  (Table~\ref{table:gene-timing}) is dominated by the cost of solving instances of the LP (\ref{eq:basis-pursuit}). 
As a result, \textsc{Factorize-Recover} achieves much lower runtime as it requires a factor of $m / r$ fewer calls to the sparse recovery procedure. 
While fast iterative recovery procedures such as SSMP \citep{berinde2009sequential} achieve faster recovery times, we found that they require approximately $2\times$ the number of measurements to achieve comparable accuracy to LP-based sparse recovery.\\

\noindent\textbf{Reconstruction Error.}\hspace{0.25em} For a fixed number of measurements $d$, we observe that the \textsc{Factorize-Recover} procedure achieves lower approximation error than the alternative method of recovering prior to factorizing (Figure~\ref{fig:gene-plot}). 
While this phenomenon is perhaps counter-intuitive, it can be understood as a consequence of the sparsifying effect of NMF.
Recall that for NMF, we model each column of the compressed data $\tilde{M}$ as a nonnegative linear combination of the columns of $\tilde{W}$.
Due to the nonnegativity constraint on the entries of $\tilde{W}$, we expect the average sparsity of the columns of $\tilde{W}$ to be at least that of the columns of $\tilde{M}$.
Therefore, if $\tilde{W}$ is a good approximation of $PW$, we should expect that the sparse recovery algorithm will recover the columns of $W$ at least as accurately as the columns of $M$, given a fixed number of measurements.

\subsection{Tensor Decomposition on EEG Time Series Data}
\label{sec:experiments-td}

EEG readings are typically organized as a collection of time series, where each series (or channel) is a measurement of electrical activity in a region of the brain. Order-3 tensors can be derived from this data by computing short-time Fourier transforms (STFTs) for each channel, yielding a tensor where each slice is a time-frequency matrix. 
We experimented with tensor decomposition on a compressed tensor derived from the CHB-MIT Scalp EEG Database \citep{shoeb2010application}.
In the original space, this tensor has dimensions
$27804\times303\times23$ ($\text{time} \times \text{frequency} \times \text{channel}$), corresponding to 40 hours of data (see the Appendix ~
for further preprocessing details). 
The tensor was randomly projected along the temporal axis. 
We then computed a rank-10 non-negative CP decomposition of this tensor using projected Orth-ALS~\citep{sharan2017orthogonalized}.\\

\noindent\textbf{Reconstruction Error.}\hspace{0.25em} At projection dimension $d=1000$, we find that \textsc{Factorize-Recover} achieves comparable error to \textsc{Recover-Factorize} (normalized Frobenius error of $0.83$ vs. $0.82$). 
However, RF is three orders of magnitude slower than FR on this task due to the large number of sparse recovery invocations required (once for each frequency bin/channel pair, or $303\times23 = 6969$).\\

\noindent{\textbf{Factor Interpretability.}}\hspace{0.25em} The EEG time series data was recorded from patients suffering from epileptic seizures \citep{shoeb2010application}. 
We found that the tensor decomposition yields a factor that correlates with the onset of seizures (Figure~\ref{fig:seizure-factor}). At $5\times$ compression, the recovered factor qualitatively retains the interpretability of the factor obtained by decomposing the tensor in the original space.

\begin{figure*}
\centering
\includegraphics[width=0.9\textwidth]{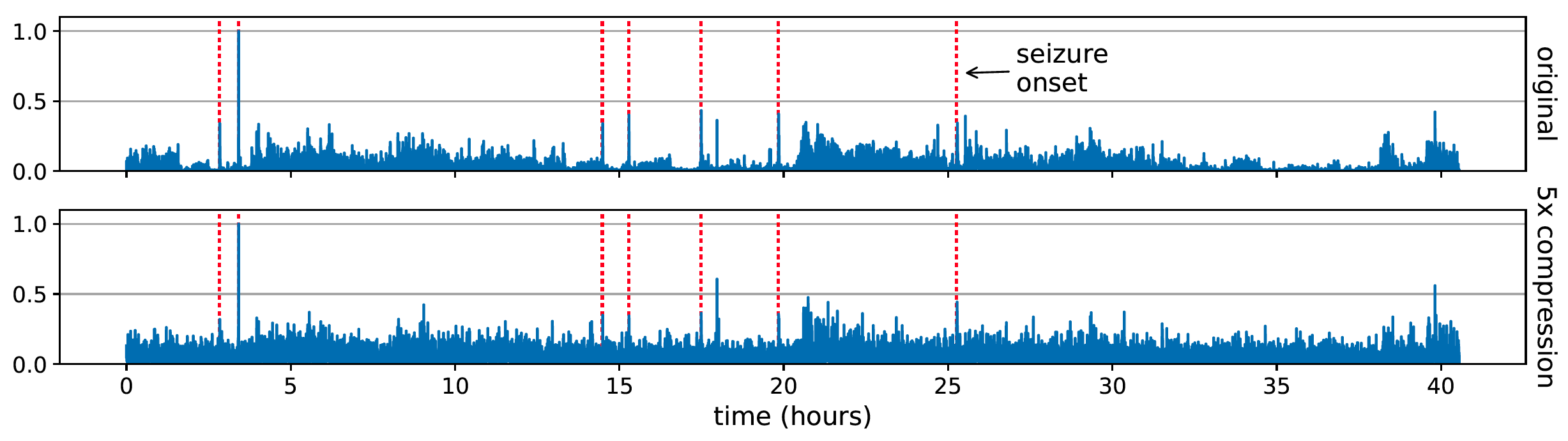}
\caption{Visualization of a factor from the tensor decomposition of EEG data that correlates with the onset of seizures in a patient (red dotted lines). The factor recovered from a $5\times$ compressed version of the tensor (bottom) retains the peaks that are indicative of seizures.}
\label{fig:seizure-factor}
\end{figure*}
\vspace{-2pt}
\section{Discussion and Conclusion}
\vspace{-2pt}
We briefly discuss our theoretical results on the uniqueness of sparse matrix factorizations in the context of dimensionality reduction via random projections. Such projections are known to preserve geometric properties such as pairwise distances \citep{kane2014sparser} and even singular vectors and singular vectors \citep{halko2011finding}. Here, we showed that maximally sparse solutions to certain factorization problems are preserved by sparse binary random projections. Therefore, our results indicate that random projections can also, in a sense, preserve certain solutions of non-convex, NP-Hard problems like NMF \citep{vavasis2009complexity}.

To conclude, in this work we analyzed low-rank matrix and tensor decomposition on compressed data. Our main theoretical contribution is a novel uniqueness result for the matrix factorization case that relates sparse solutions in the original and compressed domains. We provided empirical evidence on real and synthetic data that accurate recovery can be achieved in practice. More generally, our results in this setting can be interpreted as the unsupervised analogue to previous work on supervised learning on compressed data. A promising direction for future work in this space is to examine other unsupervised learning tasks which can directly performed in the compressed domain by leveraging sparsity.



\section*{Acknowledgments}
We thank our anonymous reviewers for their valuable feedback on earlier versions of this manuscript.
This research was supported in part by affiliate members and other supporters of the Stanford DAWN project---Ant Financial, Facebook, Google, Intel, Microsoft, NEC, SAP, Teradata, and VMware---as well as Toyota Research Institute, Keysight Technologies, Northrop Grumman, Hitachi, NSF awards AF-1813049 and CCF-1704417, an ONR Young Investigator Award N00014-18-1-2295, and Department of Energy award DE-SC0019205.

\bibliographystyle{unsrtnat}
\bibliography{references}

\newpage
\onecolumn
\appendix
\section{Supplementary Experimental Results}
\label{appendix:experiments}

\subsection{Additional Experimental Details}

\noindent\textbf{Non-negative Matrix Factorization.}\hspace{0.25em}
We optimize the following NMF objective:
\begin{align*}
\underset{W,H}{\mathrm{minimize}} &\quad  \| M - WH \|_F^2 \\
\text{subject to} &\quad  W_{ij}\geq 0, H_{jk}\geq 0 \quad \forall i, j, k \nonumber
\end{align*}
for $M\in\mathbb{R}^{n\times m}$, $W\in\mathbb{R}^{n\times r}$, $H\in\mathbb{R}^{r\times m}$.
We minimize this objective with alternating non-negative least squares using the projected gradient method~\citep{lin2007projected}.
In our experiments, we initialized the entries of $W$ and $H$ using the absolute value of independent mean-zero Gaussian random variables with variance $\frac{1}{nm} \sum_{i,j} M_{ij} / r$.
We use the same step size rule as in \citet{lin2007projected} with an initial step size of $1$.

\noindent\textbf{Sparse PCA.}\hspace{0.25em}
We optimize the following sparse PCA objective:
\begin{align*}
\underset{W,H}{\mathrm{minimize}} & \quad \frac{1}{2} \| M - WH\|_F^2 + \lambda \|W\|_1 \\
\text{subject to} &\quad \sum_{j=1}^m H_{ij}^2 = 1 \text{ for } 1 \leq i \leq r.
\end{align*}
The hyperparameter $\lambda \geq 0$ controls the degree of sparsity of the factor $W$.
We optimize this objective via alternating minimization with LARS, using the open source \texttt{SparsePCA} implementation in \texttt{scikit-learn 0.20.0} with its default settings.\footnote{\url{https://scikit-learn.org}}
Here, the factors $W$ and $H$ are initialized deterministically using the truncated SVD of $M$.

\noindent\textbf{Non-negative CP Tensor Decomposition.}\hspace{0.25em}
We optimize the following objective:
\begin{align*}
\underset{A,B,C}{\mathrm{minimize}} & \quad  \| T - \sum_{\ell=1}^r A_\ell \otimes B_\ell \otimes C_\ell \|_F^2 \\
\text{subject to}& \quad A_{i\ell} \geq 0, B_{j\ell} \geq 0, C_{k\ell} \geq 0 \quad \forall i,j,k,\ell
\end{align*}
for $T\in\mathbb{R}^{n\times m_1\times m_2}$, $A\in\mathbb{R}^{n\times r}$, $B\in\mathbb{R}^{m_1 \times r}$, $C\in\mathbb{R}^{m_2 \times r}$.
We optimize this objective using a variant of Orthogonalized Alternating Least Squares, or Orth-ALS~\citep{sharan2017orthogonalized} where the entries of each iterate is projected after each update such that their values are non-negative.
We initialize the entries of $A$, $B$, and $C$ using the absolute value of independent standard normal random variables.
For the EEG time series experiment, we use the open source \texttt{MATLAB} implementation of Orth-ALS,\footnote{\url{http://web.stanford.edu/~vsharan/orth-als.html}} modified to incorporate the non-negativity constraint.

\subsection{NMF Reconstruction Error and Projection Matrix Column Sparsity ($p$)}
\label{app:projection-sparsity}

We investigated the trade-off between reconstruction error (as measured by normalized Frobenius loss) and the sparsity parameter $p$ of the binary random projections $P$. Recall that $P\in\{0,1\}^{d\times n}$ is a randomly sampled sparse binary matrix where $p$ distinct entries in each column are selected uniformly at random and set to 1. 
In Figure~\ref{fig:nmf-sparsity}, we plot the normalized reconstruction error achieved by NMF using \textsc{Factorize-Recover} on the lung carcinoma gene expression dataset \citep{bhattacharjee2001classification} at a fixed compression level of 5.
Since we observed that the cost of sparse recovery increases roughly linearly with $p$, we aimed to select a small value of $p$ that achieves good reconstruction accuracy.
We found that the setting $p=5$ was a reasonable choice for our experiments.

\begin{figure}[H]
  \centering
  \includegraphics[width=0.5\textwidth]{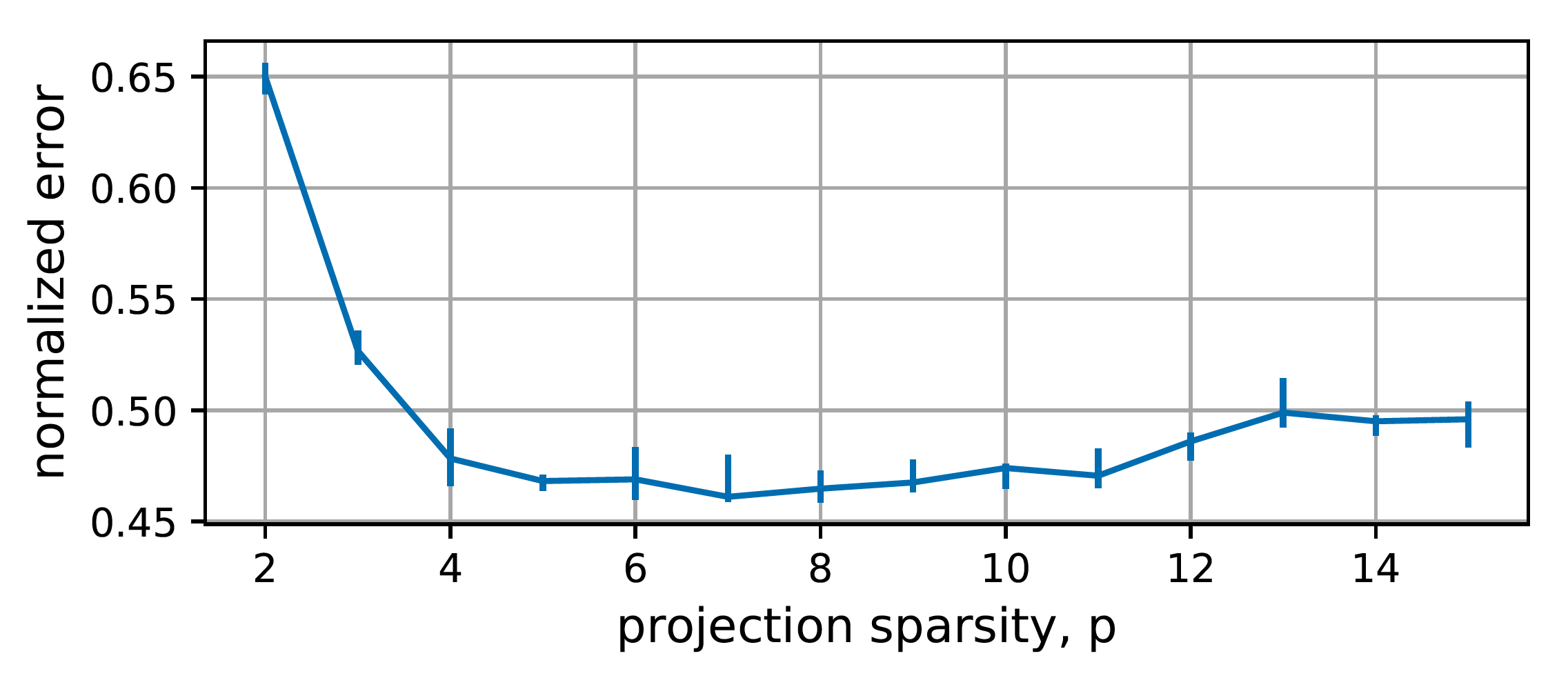}
  \caption{NMF reconstruction error vs. projection matrix column sparsity.}
  \label{fig:nmf-sparsity}
\end{figure}




\subsection{Preprocessing of EEG Data}

Each channel is individually whitened with a mean and standard deviation estimated from segments of data known to not contain any periods of seizure. The spectrogram is computed with a Hann window of size 512 (corresponding to two seconds of data). The window overlap is set to 64. In order to capture characteristic sequences across time windows, we transform the spectrogram by concatenating groups of sequential windows, following \citep{shoeb2010application}. We concatenate groups of size three.

\begin{figure}[H]
  \centering
  \includegraphics[width=3 in]{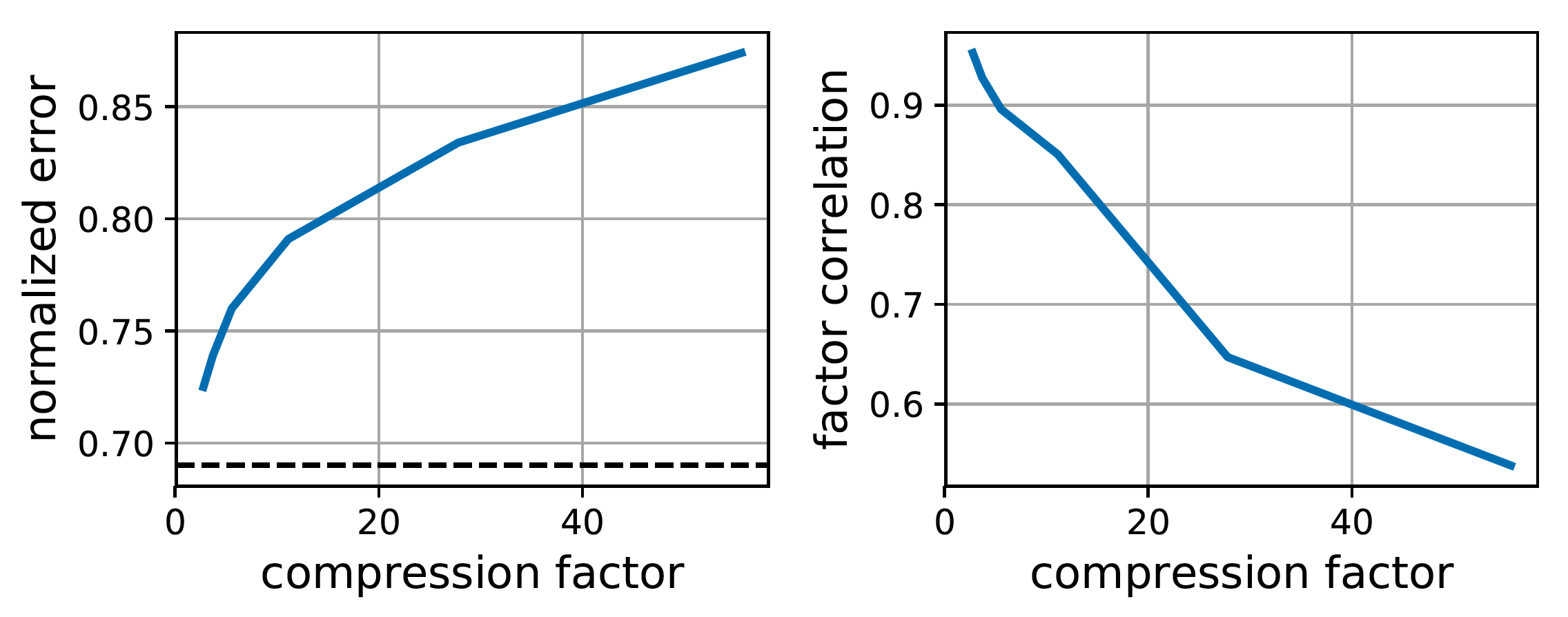}
  \caption{Accuracy of tensor decomposition on compressed EEG data. \textbf{Left:} Normalized reconstruction error; dashed line indicates baseline reconstruction error on original data. \textbf{Right:} Median Pearson correlations between recovered factors and factors computed from original data.}
  \label{fig:td}
\end{figure}

\subsection{Tensor Decomposition of Compressed EEG Data}

In Figure~\ref{fig:td} (left), we plot the normalized Frobenius errors of the recovered factorization against the compression factor $n/d$.
Due to the sparsity of the data, we can achieve over $10\times$ compression for the cost of a $10\%$ increase in reconstruction error relative to the baseline decomposition on the uncompressed data, or approximately $28\times$ compression for a $15\%$ increase. 
\textsc{Recover-Factorize} (RF) achieves slightly lower error at a given projection dimension: at $28\times$ compression ($d=1000$), RF achieves a normalized error of $0.819$ vs. $0.834$ for \textsc{Factorize-Recover}. 
However, RF is three orders of magnitude slower than \textsc{Factorize-Recover} on this dataset due to the large number of calls required to the sparse recovery algorithm (once for each frequency bin/channel pair, or $303\times 23 = 6969$) to fully recover the data tensor.
Due to the computational expense of recovering the full collection of compressed time series, we did not compare RF to FR over the full range of compression factors plotted in the figure for FR.

Figure~\ref{fig:td} (right) shows the median Pearson correlations between the columns of the recovered temporal factor and those computed on the original, uncompressed data (paired via maximum bipartite matching).
At up to $10\times$ compression, the recovered temporal factors match the temporal factors obtained by factorization on the uncompressed data with a median correlation above $0.85$.
Thus, compressed tensor factorization is able to successfully recover an approximation to the factorization of the uncompressed tensor.


\section{Proof of Theorem \ref{thm:nmf}: Uniqueness for NMF}\label{sec:nmf_app}

We follow the outline from the proof sketch. Recall that our goal will be to prove that the columns of $\tilde{W}$ are the sparsest vectors in the column space of $\tilde{W}$. For readability proofs of some auxiliary lemmas appears later in Section \ref{sec:nmf_add}

As mentioned in the proof sketch, the first step is part (a)---showing that if we take any subset $S$ of the columns of $\tilde{W}$, then the number of rows which have non-zero entries in at least one of the columns in $S$ is large. Lemma \ref{lem:exp2} shows that the number of rows which are have at least one zero entry in a subset $S$ of the columns of $W$ columns proportionately with the size of $S$. The proof proceeds by showing that choosing $B$ such that each column has $k$ randomly chosen non-zero entries ensures expansion for $B$ with high probability, and we have already ensured expansion for $P$ with high probability.
\begin{restatable}{lem}{expii}\label{lem:exp2}
	For any subset $S$ of the columns of $\tilde{W}$, define $N(S)$ to be the subset of the rows of $\tilde{W}$ which have a non-zero entry in at least one of the columns in $S$. Then for every subset $S$ of columns of $\tilde{W}$,  $|N(S)|\ge\min\{16|S|kp/25,d/200\}$ with failure probability $re^{-\beta k}/n+(1/n^5)$.
\end{restatable}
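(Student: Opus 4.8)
The plan is to reduce the statement to a purely combinatorial claim about a \emph{cascade} of two bipartite graphs, and then to establish the required expansion for each graph separately. Write $W=B\odot Y$ and $\tilde{W}=PW$. Since the nonzero entries of $Y$ come from a continuous distribution, for any row $l$ and column $i$ the entry $\tilde{W}_{li}=\sum_j P_{lj}B_{ji}Y_{ji}$ is a nontrivial linear combination of independent continuous random variables whenever the index set $\{j:P_{lj}B_{ji}=1\}$ is nonempty, and is therefore nonzero almost surely; when that set is empty the entry is exactly zero. Intersecting over the finitely many measure-zero exceptional events, I conclude that almost surely $N(S)=N_P(N_B(S))$, where $N_B(S)\subseteq[n]$ is the union of the supports of the columns of $B$ indexed by $S$ (each column having $k$ nonzeros) and $N_P(T)\subseteq[d]$ is the union of the supports of the columns of $P$ indexed by $T$ (each column having $p$ nonzeros). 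It then suffices to lower bound $|N_P(N_B(S))|$ using the expansion of the two graphs.

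For $P$ I can directly invoke the expander guarantee proved as Lemma~6 in the Appendix: with failure probability at most $1/n^5$, the support graph of $P$ is a $(\gamma_P n,4/5)$ expander with $\gamma_P n=d/(pe^5)$, so $|N_P(T)|\ge(4/5)|T|p$ for every $T$ with $|T|\le\gamma_P n$, and by monotonicity $|N_P(T)|\ge(4/5)\gamma_P n\cdot p=4d/(5e^5)\ge d/200$ whenever $|T|\ge\gamma_P n$. For $B$ I cannot reuse Lemma~6, because each column has only $k$ nonzeros and $k$ need not grow with $n$, so the failure probability must be governed by $k$ rather than by a polynomial in $n$. The plan here is a direct union bound over subsets $S$: fixing $|S|=s$, the event $|N_B(S)|<(4/5)sk$ forces the $sk$ uniformly chosen nonzero coordinates of the columns in $S$ to land in a set of size less than $(4/5)sk$, an event of probability at most $\binom{n}{(4/5)sk}\big((4/5)sk/n\big)^{sk}$. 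Multiplying by $\binom{r}{s}$, summing over $s$, and using $sk\le rk\ll n$ by sparsity, each column contributes a factor decaying like $e^{-\beta k}$, and the geometric sum collapses to a bound of the form $(r/n)e^{-\beta k}$ for a suitable constant $\beta>0$. This yields $|N_B(S)|\ge(4/5)sk$ for all $S$ up to the relevant threshold, with the stated failure probability.

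It then remains to compose the two bounds, which is routine bookkeeping. If $|S|$ is small enough that $|S|k\le\gamma_P n$, then since $|N_B(S)|\le|S|k\le\gamma_P n$ the linear bound for $P$ applies to all of $N_B(S)$, and chaining gives $|N(S)|=|N_P(N_B(S))|\ge(4/5)|N_B(S)|p\ge(4/5)^2 skp=16skp/25$. For $|S|$ beyond this threshold, $N_B(S)$ contains a subset of size $\gamma_P n$, so the saturation bound for $P$ yields $|N(S)|\ge 4d/(5e^5)\ge d/200$; taking the minimum of the two regimes gives exactly $|N(S)|\ge\min\{16|S|kp/25,\,d/200\}$. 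A union bound over the two failure events ($1/n^5$ for $P$ and $(r/n)e^{-\beta k}$ for $B$) gives the claimed total failure probability $re^{-\beta k}/n+1/n^5$.

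The main obstacle is the $B$-expansion estimate: unlike the projection matrix, $B$ has constant-order column sparsity $k$, so its expansion cannot be inherited from Lemma~6 and must be proved from scratch with a failure probability that is exponentially small in $k$ times the favorable $r/n$ factor. Getting the union bound over subset sizes $s$ to collapse to the clean $(r/n)e^{-\beta k}$ form — rather than a weaker bound that loses the $r/n$ saving — is the delicate part, and is where the sparsity regime $rk\ll n$ and the constant $C$ lower-bounding $k$ enter.
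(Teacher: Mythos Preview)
Your proposal is correct and follows the same cascade-of-two-expanders approach as the paper: reduce $N(S)$ to $N_P(N_B(S))$ via the continuity of $Y$, establish $(4/5)$-expansion for each of $B$ and $P$ separately, and compose.

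The one point worth flagging is your identification of the $B$-expansion as the ``main obstacle'' requiring a from-scratch union bound. In fact Lemma~\ref{expander} (the expander lemma you invoke for $P$) has two parts, and Part~1 is tailored precisely to this regime: $n_1<n_2$, degree $D\ge c$ merely a constant, failure probability $n_1e^{-\beta D}/n_2$. The paper's proof simply applies Part~1 with $(n_1,n_2,D)=(r,n,k)$ to get the $B$-expansion with failure probability $re^{-\beta k}/n$, and Part~2 with $(n_1,n_2,D)=(n,d,p)$ to get the $P$-expansion with failure probability $1/n^5$. Your direct union bound over subsets $S$ is correct, but it is essentially re-deriving Part~1 of Lemma~\ref{expander}; the ``delicate'' collapse to $(r/n)e^{-\beta k}$ that you worry about is exactly what Part~1 already packages. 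So the obstacle you anticipate is not there.

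A minor bookkeeping remark on your composition: in your second case ($|S|k>\gamma_P n$) you assert that $N_B(S)$ contains a subset of size $\gamma_P n$, but the linear $B$-bound only gives $|N_B(S)|\ge(4/5)|S|k>(4/5)\gamma_P n$, which is a constant factor short. The paper handles this by also carrying the saturation bound $|N_B(S)|\ge n/200$ from the $B$-expander (Part~1 gives $\gamma_1 r = n/(ke^5)$, so beyond that threshold $|N_B(S)|\ge 4n/(5e^5)\ge n/200$), and since $n/200\ge d/(pe^5)=\gamma_P n$ under the assumed parameters, the $P$-saturation then kicks in. This is indeed routine, as you say, but you should track both regimes of the $B$-bound, not just the linear one.
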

We now prove the second part of the argument---that any linear combination of columns in $S$ cannot have much fewer non-zero entries than $N(S)$, as the probability that many of the non-zero entries get canceled is zero. Lemma \ref{lem:null2} is the key to showing this. Define a vector $x$ as \emph{fully dense} if all its entries are non-zero.
\begin{lem}\label{lem:null2}
	For any subset $S$ of the columns of $\tilde{W}$, let $U$ be the submatrix of $\tilde{W}$ corresponding to the $S$ columns and $N(S)$ rows. Then with probability one, every subset of the rows of $U$ of size at least $|S|p$ does not have any fully dense vector in its right null space. 
\end{lem}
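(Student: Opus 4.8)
\textbf{Proof plan for Lemma \ref{lem:null2}.} The probability in the statement is over the continuous entries of $Y$, with the combinatorial data $B$ and $P$ (and hence the supports of the columns of $\tilde W = PW$) held fixed. Since there are only finitely many subsets $S \subseteq [r]$ and finitely many row subsets $T$, a union bound reduces the claim to showing, for each fixed $S$ (write $s = |S|$) and each fixed $T \subseteq N(S)$ with $|T| = sp$, that the $|T| \times s$ submatrix $U_T$ has no \emph{fully dense} vector in its right null space with probability $1$; the case $|T| > sp$ then follows since enlarging $T$ only shrinks the null space (a fully dense null vector of $U_{T'}$ with $T' \supseteq T$ restricts to a fully dense null vector of $U_T$). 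First I would strip out the columns of $U_T$ that are identically zero (as functions of $Y$) on the rows $T$: let $S_0 = \{j \in S : C_j \not\equiv 0\}$, where $C_j$ is the $j$th column of $U_T$. A fully dense $z$ satisfies $U_T z = \sum_{j \in S_0} z_j C_j$, so it suffices to prove that the columns $\{C_j\}_{j \in S_0}$ are \emph{linearly independent}: independence forces $z_j = 0$ for all $j \in S_0$, contradicting full density (and $S_0 \neq \emptyset$, since $T \subseteq N(S)$ is nonempty means some $S$-column is nonzero on $T$).

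\emph{Genericity.} Each entry $U_{aj} = \sum_{i : P_{ai} = 1} B_{ij} Y_{ij}$ is a linear form in the variables $\{Y_{ij}\}$, and crucially every such variable carries a \emph{fixed} column index $j$ and hence appears only in column $j$ of $U_T$. Thus the columns $C_j$ depend on pairwise disjoint sets of variables. Linear independence of $\{C_j\}_{j \in S_0}$ is equivalent to the nonvanishing of some $s' \times s'$ minor of $U_T$, where $s' = |S_0|$, and this minor is a polynomial in the entries of $Y$. Since $Y$ has a continuous distribution, any fixed polynomial that is not identically zero vanishes with probability $0$; it therefore suffices to exhibit a single $s' \times s'$ submatrix whose determinant is a nonzero polynomial in $Y$.

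\emph{The crux.} The remaining step is combinatorial: select distinct rows $\rho_1,\dots,\rho_{s'} \in T$, matched to the columns of $S_0$, so that the corresponding determinant does not vanish identically. Because variables are shared \emph{within} a column — a single coordinate $i$ feeds up to $p$ rows, as its column of $P$ has $p$ ones — distinct permutations in the Leibniz expansion can generate the same monomial and cancel, so a bare support-matching (Hall's condition) is not enough. The plan is to pick, for each $j \in S_0$, a representative coordinate $i_j$ (with $B_{i_j j} = 1$) and a row $\rho_j \in T$ with $P_{\rho_j i_j} = 1$, chosen so that the $\rho_j$ are distinct and no \emph{other} chosen row lies in the $p$-element set $R_{i_j} = \{a : P_{a i_j} = 1\}$. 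For such an ``induced-matching'' selection the monomial $\prod_{j \in S_0} Y_{i_j j}$ can only be produced by the identity matching, so it survives with coefficient $\pm 1$, certifying $\det \not\equiv 0$. This is exactly where $|T| \geq |S|p$ enters: each committed coordinate forbids at most $p$ rows, so greedily processing the $s'$ columns removes at most $(s'-1)p < sp$ rows, never exhausting $T$. I expect the main obstacle to be making this greedy selection rigorous — in particular guaranteeing that each column of $S_0$ still has a supporting row in $T$ after the rows blocked by earlier choices are removed — and I would lean on the expansion of $\tilde W$ established in Lemma \ref{lem:exp2} (equivalently the expander structure of $B$ and $P$) to ensure that every column retains enough supporting rows in $T$ for the greedy step to succeed.
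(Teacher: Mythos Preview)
There is a genuine gap: you reduce to proving that the nonzero columns $\{C_j\}_{j\in S_0}$ are linearly independent, but this is strictly stronger than ``no fully dense null vector,'' and it is \emph{false} for some choices of $B,P$ even though the lemma holds. Take $|S|=3$, $p=2$; let columns $1,2$ of $B$ share support $\{i_1,i_2\}$, let column $3$ have support $\{i_3,i_4\}$, and take $P$ with identical columns at $i_1,i_2$ so that $R_{i_1}=R_{i_2}=\{a_1,a_2\}$, while $R_{i_3}=\{a_3,a_4\}$, $R_{i_4}=\{a_5,a_6\}$. Then $N(S)=\{a_1,\dots,a_6\}$, and for $T=N(S)$ (so $|T|=|S|p=6$) both $C_1$ and $C_2$ are scalar multiples of $e_{a_1}+e_{a_2}$ for \emph{every} $Y$, hence proportional; $S_0=\{1,2,3\}$, yet every $3\times 3$ minor of $U_T$ vanishes identically and your induced-matching search necessarily comes up empty. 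Still, the null space here is spanned by a vector with third coordinate zero, so the lemma itself holds. The paper proves the lemma using only continuity of $Y$ and the fact that each $Y_{ij}$ appears in at most $p$ rows of $\tilde W$; it never invokes expansion of $B$ or $P$, so leaning on Lemma~\ref{lem:exp2} both imports an unneeded hypothesis and cannot rescue a target that is false at this generality.

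The paper avoids the issue by never trying to match columns to rows. It greedily partitions the rows: pick a remaining row, pick \emph{any} variable $y_\ell$ in it, let $\mathcal G_\ell$ be all remaining rows containing $y_\ell$ (so $|\mathcal G_\ell|\le p$), remove them, and repeat; any $|S|p$ rows then meet at least $|S|$ groups. The pivot variables $y_1,\dots,y_{|S|}$ may well lie in repeated columns of $S$, and that is fine. The induction tracks the dichotomy ``null space has no fully dense vector, or its dimension has dropped by one'': if a fully dense $x$ survives after $j$ groups, the pivot row $U_c$ of the next group satisfies $U_c^T x\ne 0$ almost surely, because $y_{j+1}$ enters $U_c^T x$ with coefficient $x_{\mathrm{col}(y_{j+1})}$ --- nonzero precisely because $x$ is fully dense, regardless of which column $y_{j+1}$ belongs to. After $|S|$ groups the null space is trivial or already free of fully dense vectors. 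The weaker target is exactly what makes the simple greedy suffice: you need one fresh variable per group, not a system of distinct representatives matching columns to rows.
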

\begin{proof}
	Without loss of generality, assume that $S$ corresponds to the first $|S|$ columns of $\tilde{W}$, and $N(S)$ corresponds to the first $|N(S)|$ rows of $\tilde{W}$. We will partition the rows of $U$ into $t$ groups $\{\mathcal{G}_1,\dots, \mathcal{G}_{t}\}$. Each group will have size at most $p$. To select the first group, we choose any entry $y_1$ of $Y$ which appears in the first row of $U$. For example, if the first column of $W$ has a one in its first row, and $P(1,1)=1$, then the random variable $Y_{1,1}$ appears in the first row of $U$. Say we choose $y_1=Y_{1,1}$. We then choose $\mathcal{G}_1$ to be the set of all rows where $y_1$ appears. We then remove the set of rows $\mathcal{G}_1$ from $U$. To select the second group, we pick any one of the remaining rows, and choose any entry $y_2$ of $Y$ which appears in that row of $U$. $\mathcal{G}_2$ is the set of all rows where $y_2$ appears. We repeat this procedure to obtain $t$ groups, each of which will have size at most $p$ as every variable appears in $p$ columns. Hence any subset of rows of size at least $|S|p$ must correspond to at least $|S|$ groups.
	
	Let $\mathcal{N}_j$ be the right null space of the first $j$ groups of rows. We define $\mathcal{N}_0=\mathbb{R}^{|S|}$. We will now show that either $\rank(\mathcal{N}_i)=|S|-i$ or $\mathcal{N}_i$ does not contain a fully dense vector. We prove this by induction. Consider the $j$th step, at which we have $j$ groups $\{\mathcal{G}_1, \dots, \mathcal{G}_j\}$. By the induction hypothesis, either $\mathcal{N}_j$ does not contain any fully dense vector,  or $\rank(\mathcal{N}_j)=|S|-j$. If $\mathcal{N}_j$ does not contain any fully dense vector, then we are done as this implies that $\mathcal{N}_{j+1}$ also does not contain any fully dense vector. Assume that  $\mathcal{N}_j$ contains a fully dense vector $x$. Choose any row $U_c$ which has not been already been assigned to one of the sets. By the following elementary proposition, the probability that $x$ is orthogonal to $U_c$ is zero. We provide a simple proof in Section \ref{sec:nmf_add}.
	\begin{restatable}{lem}{ele}
		Let $v=(v_1,\dots,v_n)\in \mathbb{R}^n$ be a vector of $n$ independent random variables drawn from some continuous distribution. For any subset $S\subseteq \{1,\dots,n\}$, let $v(S)\in\mathbb{R}^{|S|}$ refer to the subset of $v$ corresponding to the indices in $S$. Consider $t$ such subsets $S_1,\dots, S_t$. Let each set $S_i$ defines some linear relation $\alpha_{S_i}^Tv(S_i)=0$, for some $\alpha_{S_i}\in \mathbb{R}^{|S_i|}$ where $\|\alpha_{S_i}\|=1$ and each entry of the vector $\alpha_{S_i}$ is non-zero on the variables in the set $S_i$. Assume that the variable $v_{i}$ appears in the set $S_i$. Then the probability distribution of the set of variables $\{v_{t+1},\dots,v_{n}\}$ conditioned on the linear relations defined by $S_1,\dots, S_t$ is still continuous, and hence any linear combination of the set of variables $\{v_{t+1},\dots,v_{n}\}$ has zero probability of being zero.
	\end{restatable}

	If $\mathcal{N}_{j}$ contains a fully dense vector, then with probability one, $\rank(\mathcal{N}_{j+1})=\rank(\mathcal{N}_{j})-1=n-j-1$. This proves the induction argument. Therefore, with probability one, for any $t\ge |S|$, either  $\rank(\mathcal{N}_{t})=0$ or $\mathcal{N}_{t}$ does not contain a fully dense vector and Lemma \ref{lem:null2} follows. 
\end{proof}
We now complete the proof of Theorem \ref{thm:nmf}. Note that the columns of $\tilde{W}$ have at most $kp$ non-zero entries, as each column of $P$ has $p$-sparse. Consider any set $S$ of columns of $\tilde{W}$.  Consider any linear combination $v\in \mathbb{R}^d$ of the set $S$ columns, such that all the combination weights $x\in\mathbb{R}^{|S|}$ are non-zero. By Lemma \ref{lem:exp2}, $|N(S)|\ge \min\{16|S|kp/25,d/200\}$ with failure probability $re^{-\beta k}/n+(1/n^5)$. We claim that $v$ has more than $|N(S)|-|S|p$ non zero entries. We prove by contradiction. Assume that $v$ has $|N(S)|-|S|$ or fewer non zero entries. Consider the submatrix $U$ of $\tilde{W}$ corresponding to the $S$ columns and $N(S)$ rows. If $v$ has $|N(S)|-|S|p$ or fewer non zero entries, then there must be a subset $S'$ of the $|N(S)|$ rows of $U$ with $|S'|=|S|p$, such that each of the rows in $S'$ has at least one non-zero entry, and the fully dense vector $x$ lies in the right null space of $S'$. But by Lemma \ref{lem:null2}, the probability of this happening is zero. Hence $v$ has more than $|N(S)|-|S|p$ non zero entries. Lemma \ref{lem:nmf_comp} obtains a lower bound on $|N(S)|-|S|p$ using simple algebra.
\begin{restatable}{lem}{nmfcom}\label{lem:nmf_comp}
	$|N(S)|-|S|p \ge 6kp/5$ for $|S|>1$ for $d\ge 400p(r+k)$.
\end{restatable}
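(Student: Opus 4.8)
The plan is to reduce this to elementary algebra by invoking the expansion bound of Lemma~\ref{lem:exp2}, which guarantees $|N(S)| \ge \min\{16|S|kp/25,\, d/200\}$. Since the claimed inequality only needs to hold for $|S| > 1$, I would treat the two branches of this minimum separately and verify that in each case the surplus $|N(S)| - |S|p$ exceeds $6kp/5$.

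First I would handle the branch where the first term is the smaller, so that $|N(S)| \ge 16|S|kp/25$. Here
\[
|N(S)| - |S|p \ge |S|p\left(\frac{16k}{25} - 1\right) = |S|p \cdot \frac{16k - 25}{25}.
\]
Once $k$ is large enough that $16k > 25$, this expression is increasing in $|S|$, so the worst case over $|S| \ge 2$ occurs at $|S| = 2$, giving $\tfrac{2p(16k-25)}{25}$. Demanding that this be at least $6kp/5 = 30kp/25$ reduces to $2kp \ge 50p$, i.e.\ $k \ge 25$. This is precisely where the hypothesis $k > C$ enters: choosing the fixed constant $C \ge 25$ makes this branch go through.

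For the complementary branch, where $d/200$ is the smaller term, I would use $|N(S)| \ge d/200$ together with the trivial bound $|S| \le r$, since $\tilde{W}$ has only $r$ columns. Then $|N(S)| - |S|p \ge d/200 - rp$, and substituting the assumption $d \ge 400p(r+k)$ gives $d/200 \ge 2p(r+k)$, whence $d/200 - rp \ge pr + 2pk \ge 2pk \ge 6kp/5$. Thus both branches deliver the claim.

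The computations here are entirely routine, so there is no genuine analytic obstacle; the only thing requiring care is the bookkeeping of constants. Concretely, it is the first branch that pins down the admissible value of the constant $C$ in the theorem statement (one must take $C \ge 25$ so that the bound survives at $|S| = 2$), and one must remember to use the correct bound on $|S|$ in each branch---the lower bound $|S| \ge 2$ from the hypothesis $|S| > 1$ in the first case, and the upper bound $|S| \le r$ in the second. The failure probability is inherited directly from Lemma~\ref{lem:exp2} and needs no separate accounting.
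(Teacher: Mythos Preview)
Your argument is correct and follows exactly the same two-case split as the paper's proof, using Lemma~\ref{lem:exp2} for the first branch and $|S|\le r$ together with $d\ge 400p(r+k)$ for the second. Your bookkeeping is in fact tighter: the paper's display drops a factor of $1/25$ in passing to its second line and concludes that $k\ge 2$ suffices in the first branch, whereas your computation correctly shows that the worst case $|S|=2$ forces $k\ge 25$, which is precisely what the standing hypothesis $k>C$ is there to absorb.
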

Hence any linear combination of more than one column of $\tilde{W}$ has at least $6kp/5$ non-zero entries with failure probability $re^{-\beta k}/n$. Hence the columns of $\tilde{W}$ are the sparsest vectors in the column space of $\tilde{W}$ with failure probability $re^{-\beta k}/n+(1/n^5)$.

\subsection{Additional Proofs for Uniqueness of NMF}\label{sec:nmf_add}

	\begin{restatable}{lem}{spaceeq}\label{lem:sp_eq}
		If $H$ is full row rank, then the column spaces of $\tilde{W}$ and ${W}'$ are equal.
	\end{restatable}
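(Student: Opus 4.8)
The plan is to show that both left factors have the same column space as the fixed matrix $\tilde{M}$ itself, so that they must equal each other. The only structural fact I will exploit is that right-multiplication by a full-row-rank matrix preserves the column space. Concretely, since $H\in\mathbb{R}^{r\times m}$ has full row rank $r$, the Gram matrix $HH^T$ is invertible (it has rank $\rank(H)=r$), so $H$ admits the right inverse $H^{R}:=H^T(HH^T)^{-1}$ with $HH^{R}=I_r$.

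First I would apply this to the factorization $\tilde{M}=\tilde{W}H$. On one hand every column of $\tilde{M}=\tilde{W}H$ is a linear combination of the columns of $\tilde{W}$, so $\text{col}(\tilde{M})\subseteq\text{col}(\tilde{W})$. On the other hand $\tilde{M}H^{R}=\tilde{W}HH^{R}=\tilde{W}$, so every column of $\tilde{W}$ is a combination of columns of $\tilde{M}$, giving $\text{col}(\tilde{W})\subseteq\text{col}(\tilde{M})$. Hence $\text{col}(\tilde{W})=\text{col}(\tilde{M})$, and in particular $\rank(\tilde{M})=\rank(\tilde{W})$. Next I would treat the second factorization $\tilde{M}=W'H'$. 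The inclusion $\text{col}(\tilde{M})\subseteq\text{col}(W')$ is again immediate. For the reverse inclusion I would argue by a dimension count: since $\tilde{W}=PW$ has full column rank $r$ under the hypotheses of Theorem~\ref{thm:nmf}, we have $\dim\text{col}(\tilde{M})=\rank(\tilde{W})=r$; because $W'$ has only $r$ columns, $\dim\text{col}(W')\le r=\dim\text{col}(\tilde{M})$, and combined with $\text{col}(\tilde{M})\subseteq\text{col}(W')$ this forces $\text{col}(W')=\text{col}(\tilde{M})$. (Equivalently, $\rank(\tilde{M})=r$ forces $H'$ to have full row rank, and applying the same right-inverse identity to $H'$ yields $W'=\tilde{M}(H')^{R}$, hence $\text{col}(W')\subseteq\text{col}(\tilde{M})$.) Chaining the two steps gives $\text{col}(\tilde{W})=\text{col}(\tilde{M})=\text{col}(W')$, which is the claim.

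The one delicate point, and the place I expect the real content to sit, is the claim that $\tilde{M}$ genuinely has rank exactly $r$, i.e.\ that $\tilde{W}=PW$ retains full column rank after projection; all the column-space manipulations via $H^{R}$ are routine linear algebra. I would discharge this either by taking a \emph{rank-$r$} factorization to mean that $\tilde{M}$ has rank exactly $r$, or—to avoid any circularity with the theorem whose proof uses this lemma—by appealing directly to the high-probability structural event established in Lemma~\ref{lem:exp2} and the expander guarantee for $P$: under that event the columns of $\tilde{W}$ are the unique sparsest nonzero vectors in their span, so any nontrivial vanishing combination of them would be a strictly sparser (zero) vector in $\text{col}(\tilde{W})$, a contradiction, whence the columns are linearly independent and $\rank(\tilde{W})=r$. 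With that single ingredient in place the lemma follows immediately.
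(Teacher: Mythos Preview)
Your proof is correct and follows the same approach as the paper: both arguments show $\text{col}(\tilde{W})=\text{col}(\tilde{M})=\text{col}(W')$. The paper's version is terser---it uses the rank identity $\rank(\tilde{W}H)=\rank(\tilde{W})$ (from $H$ having full row rank) rather than an explicit right inverse, and dispatches the $W'$ case with ``by the same argument''---whereas you correctly isolate the requirement $\rank(\tilde{M})=r$ needed for that second step, a point the paper leaves implicit.
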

\begin{proof}
	We will first show that the column space of $\tilde{M}$ equals the column space of $\tilde{W}$.  Note that the column space of $\tilde{M}$ is a subspace of the column space of $\tilde{W}$. As $H$ is full row rank, the rank of the column space of $\tilde{M}$ equals the rank of the column space of $\tilde{W}$. Therefore, the column space of $\tilde{M}$ equals the column space of $\tilde{W}$. 
	
	By the same argument, for any alternative factorization $\tilde{M}=W'H'$, the column space of $W'$ must equal the column space of $\tilde{M}$---which equals the column space of $\tilde{W}$. As the column space of $W'$ equals the column space of $\tilde{W}$, therefore $W'$ must lie in the column space of $\tilde{W}$.
\end{proof}

\expii*
\begin{proof}
	We first show a similar property for the columns of $W$, and will then extend it to the columns of $\tilde{W}=PW$. We claim that for every subset of $S$ columns of $W$, $|N(S)|\ge\min\{4|S|k/5,n/200\}$ with failure probability $re^{-\beta k}/n$.

	
	To verify, consider a bipartite graph $T$ with $r$ nodes on the left part $U_1$ corresponding to the $r$ columns of $W$, and $n$ nodes on the right part $V$ corresponding to the $n$ rows or indices of each factor. The $i$th node in $U_1$ has an edge to $k$ nodes in $V$ corresponding to the non-zero indices of the $i$th column of $W$. Note that $|N(S)|$ is the neighborhood of the set of nodes $S$ in $G$. From Part 1 of Lemma \ref{expander}, the graph $G$ is a $(\gamma_1 r,4/5)$ expander with failure probability $re^{-\beta k}/n$ for $\gamma_1 = n/(rke^5)$ and a fixed constant $\beta>0$. 
	
	\begin{restatable}{lem}{expander}\label{expander}
		Randomly choose a bipartite graph $G$ with $n_1$ vertices on the left part $U$ and $n_2$ vertices on the right part $V$ such that every vertex in $U$ has degree $D$. Then,
		\begin{enumerate}
			\item  For every $n_1, n_2, n_1<n_2$, $G$ is a $(\gamma n_1, 4/5)$ expander for $D\ge c$ for some fixed constant $c$ and $\gamma n_1 =\frac{n_2}{De^{5}}$ except with probability $n_1e^{-\beta D}/n_2$ for a fixed constant $\beta>0$.
			\item For every $n_1, n_2, n_2 <n_1$, $G$ is a $(\gamma n_1, 4/5)$ expander for $D\ge c\log n_1$ for some fixed constant $c$ and $\gamma n_1 =\frac{n_2}{De^{5}}$ except with probability $(1/n_1)^5$.
		\end{enumerate}
	\end{restatable}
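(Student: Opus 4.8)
The plan is to prove the expansion guarantee by a union bound over all ``small'' left-subsets together with all possible neighborhoods into which they could collapse. Fix the expansion factor $\alpha = 4/5$ and the size threshold $\gamma n_1 = n_2/(De^5)$ from the statement. For a fixed subset $S\subseteq U$ of size $t$, the bad event $|N(S)| < \alpha t D$ means that all $tD$ edges leaving $S$ land inside some target set $T\subseteq V$ with $|T| < \alpha t D$. Since each left vertex selects its $D$ neighbors independently, the probability that $N(S)$ is contained in a fixed $T$ of size $\alpha t D$ is at most $(\alpha t D/n_2)^{tD}$, so a union bound over the $\binom{n_1}{t}$ choices of $S$ and the $\binom{n_2}{\alpha t D}$ choices of $T$ gives, for each $t$,
\[
\Prob[\exists\, |S| = t : |N(S)| < \alpha t D] \le \binom{n_1}{t}\binom{n_2}{\alpha t D}\left(\frac{\alpha t D}{n_2}\right)^{tD}.
\]
First I would simplify this using $\binom{a}{b}\le(ea/b)^b$ and combining the powers of $n_2$, which reduces the per-$t$ bound to the clean form $\big(\tfrac{e n_1}{t}\big)^t\, e^{4tD/5}\,\big(\tfrac{4tD}{5n_2}\big)^{tD/5}$.

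The crucial structural input is the size constraint $t\le \gamma n_1 = n_2/(De^5)$, which forces the collision factor to satisfy $\tfrac{4tD}{5n_2}\le \tfrac{4}{5e^5} < e^{-5}$. From here the argument splits into the two regimes, and this split is exactly where the two different degree requirements and failure probabilities arise. I would also record that the smallest subset that can possibly violate expansion has size $t=2$: since each left vertex has $D$ distinct neighbors, any single vertex already satisfies $|N(S)| = D \ge \alpha D$, so failure requires $\alpha t D > D$, i.e.\ $t\ge 2$, and the sums below therefore begin at $t=2$.

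For \textbf{Part 2} ($n_2 < n_1$), I would bound the collision factor crudely as $\left(\frac{4tD}{5n_2}\right)^{tD/5} \le (e^{-5})^{tD/5} = e^{-tD}$, which cancels the $e^{4tD/5}$ term and eliminates all dependence on $n_2$, leaving a per-$t$ bound of $(e\,n_1\, e^{-D/5})^t$. Taking $D\ge c\log n_1$ for a sufficiently large constant $c$ drives the base below $n_1^{-3}$, so summing the resulting geometric series over $t\ge 2$ yields total failure probability at most $(1/n_1)^5$. For \textbf{Part 1} ($n_1 < n_2$), the crude bound is too lossy — with $D$ constant it cannot beat the $\binom{n_1}{t}\approx n_1^t$ subset count — so instead I would retain a single power of the collision factor to keep its $1/n_2$ dependence, rewriting the per-$t$ bound as $Z^t$ with $Z = \tfrac{4e^6 n_1 D}{5 n_2}\,e^{-D/5}$. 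Because $n_1 < n_2$, a large enough constant $D$ makes $Z < 1/2$, and then $\sum_{t\ge 2} Z^t \le 2Z^2 \le 2Z \le n_1 e^{-\beta D}/n_2$ for a fixed $\beta>0$, after absorbing the polynomial-in-$D$ prefactor into the exponential.

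The main obstacle I anticipate is precisely this asymmetry: the same union bound must be post-processed two different ways to meet the two claimed failure probabilities. The delicate point is Part 1 — one must resist fully exploiting the collision factor and instead peel off exactly one power of $\tfrac{4tD}{5n_2}$ so that the surviving $n_1/n_2$ matches the target, while verifying that \emph{constant} degree already pushes $Z$ below $1$. Choosing the constants (the $c$ in $D\ge c\log n_1$ for Part 2, and the threshold on $D$ and the value of $\beta$ for Part 1) so that both the small-$t$ terms and the boundary term $t=\gamma n_1$ remain controlled is the remaining bookkeeping, but the geometric decay of $Z^t$ (respectively $(e\,n_1 e^{-D/5})^t$) makes the tail sums routine once the per-term base is below one.
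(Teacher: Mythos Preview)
Your proposal is correct and follows essentially the same route as the paper: union bound over left-subsets $S$ and candidate neighborhoods $M$, the estimate $\binom{a}{b}\le(ea/b)^b$, and then control of a geometric series whose base is driven below $1$ by the choice $\gamma n_1=n_2/(De^5)$. The only organizational difference is that the paper collapses everything into a single uniform bound $x=(n_1 e^6/n_2)\,D\,e^{-D/25}$ on the per-term base and then says both parts follow by substituting the two parameter regimes, whereas you tune the tail bound separately for each part (fully absorbing the collision factor for Part~2, peeling off one power for Part~1). Both treatments are valid and yield the stated failure probabilities.
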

	
	As $G$ is a $(\gamma_1 r,4/5)$ expander, every set of $|S|\le \gamma_1 r$ nodes has at least $4|S|k/5$ neighbors. A set of size $|S| >\gamma_1 r$ nodes, must include a subset of size $\gamma_1 r$ which has $4n/(5e^5)\ge n/200$ neighbours, and hence every set of size $|S|>\gamma_1 r$ has at least $n/200$ neighbors. Therefore, for every subset of $S$ columns, $|N(S)|\ge\min\{4|S|k/5,n/200\}$ with failure probability $re^{-\beta k}/n$.
	
	We will now extend the proof to show the necessary property for $\tilde{W}$. After the projection step, the $n$ indices are projected to $d$ dimensions, and the projection matrix is a $(\gamma_2 n, 4/5)$ expander with $\gamma_2 = d/(nke^5)$. We can now consider a tripartite graph, by adding a third set $U_2$ with $d$ nodes. We add an edge from a node $i$ in $V$ to node $j$ in $U_2$ if $P(j,i)=1$. For any subset $S$ of columns of $\tilde{W}$, $N(S)$ are the set of nodes in $U_2$ which are reachable from the nodes $S$ in $U_1$. 
	
	With failure probability $(1/n^5)$, the projection matrix $P$ is a $(\gamma_2 n, 4/5)$ expander with $\gamma_2 = d/(npe^5)$. Therefore every subset of size $t$ in $V$ has at least $\min\{4tp/5,d/200\}$ neighbors in $W$. By combining this argument with the fact that every set of $S$ nodes in $U$, has at least $\min\{4|S|k/5,n/200\}$ neighbors with failure probability $re^{-\beta k}/n$, it follows that for every subset of $S$ columns of $\tilde{W}$,  $|N(S)|\ge\min\{16|S|kp/25,d/200\}$ with failure probability $re^{-\beta k}/n+(1/n^5)$.
\end{proof}

\ele*

\begin{proof}

We prove by induction. For the base case, note that without any linear constraints, the set of $n$ random variables $\{v_{1},\dots,v_{n}\}$ is continuous by definition. Consider the $j$th step, when linear constraints defined by the sets $S_1,\dots,S_{j}$ have been imposed on the variables. We claim that the distribution of the set of random variables $\{v_{j+1},\dots,v_{n}\}$ is continuous after imposition of the constraints $S_1,\dots,S_j$. By the induction hypothesis, the distribution of the set of random variables $\{v_{j},\dots,v_{n}\}$ is continuous after imposition of the constraints $S_1,\dots,S_{j-1}$. Suppose that the linear constraint $\alpha_{S_j}$ is satisfied for some assignment $(t_j,\dots,t_n)$ to the random variables $\{v_{j},\dots,v_{n}\}$ which appear in the constraint $S_j$. As the distribution of the variables $\{v_{j},\dots,v_{n}\}$ is continuous by our induction hypothesis, there exists some $\epsilon>0$ such that the pdf of the variables $v_i$ for $j\le i \le n$ is non-zero in the interval $[t_i-\epsilon,t_i+\epsilon]$. Let $\delta= |\alpha_{S_i}(j)|$ be the absolute value of the linear coefficients of the variable $v_j$ in $\alpha_{S_i}$. For any choice of $v_i, j+1\le i\le n$ in the interval $[t_i-\delta\epsilon/\sqrt{n},t_i+\delta\epsilon/\sqrt{n}]$, the linear constraint $\alpha_{S_i}$ can be satisfied by some choice of the variable $v_j$ in $[t_j-\epsilon,t_j+\epsilon]$. Hence the probability distribution of the set of variables $\{v_{j+1},\dots,v_{n}\}$ is still continuous after adding the constraint $S_j$, which proves the induction step.

\end{proof}

\nmfcom*
\begin{proof}
	For $2\le |S|\le  d/(128kp)$, 
	\begin{align}
	|N(S)|-|S|p&\ge (16kp/25)|S|-p|S|=30kp/25+kp(16|S|-30)/25-p|S| \nonumber\\
	&\ge 6kp/5+p\Big(k(16|S|-30)-|S|\Big) \nonumber
	\end{align} 
	For $|S|\ge 2$ and $k\ge2$, $k(16|S|-30)-|S|\ge 0$, hence $|N(S)|-|S|p\ge 6kp/5$ for $2\le |S|\le  d/(128kp)$. For $|S|>d/(128kp)$, $|N(S)|\ge d/200$. Therefore, $|N(S)|-|S|p\ge d/200-rp\ge 2kp$ for $d\ge 400p(r+k)$. 
\end{proof}

\expander*
\begin{proof}
	Consider any subset $S\subset U$ with $|S|\le \gamma n_1$. Let $\Prob(N(S)\subseteq M)$ denote the probability of the event that the neighborhood of $S$ is entirely contained in $M\subset V$. $\Prob(N(S)\subseteq M) \le \Big(\frac{|M|}{n_2}\Big)^{D|S|}$. We will upper bound the probability of $G$ not being an expander by upper-bounding the probability of each subset $S\subset U$ with $|S|\le \gamma n_1$ not expanding. Let $\Prob(\bar{S})$ denote the probability of the neighborhood of $S$ being entirely contained in a subset $M\subset V$ with $M< {\alpha} |S|D$. By a union bound,
	\begin{align}
	\Prob(G \text{ is not a $(\gamma n_1,\alpha)$ expander}) &\le \sum_{\substack{S\subset U\\ |S|\le \gamma n_1}}^{} \Prob(\bar{S})\nonumber\\
	&\le \sum_{\substack{S\subset U\\ |S|\le \gamma n_1}}^{} \sum_{\substack{M\subset V\\ M= {\alpha} |S|D}}^{}\Prob(N(S)\subseteq M)\nonumber\\
	&\le  \sum_{s=1}^{\gamma n_1}\sum_{\substack{S\subset U\\ |S|= s}}^{} \sum_{\substack{M\subset V\\ M= {\alpha} |S|D}}^{} \Big(\frac{{\alpha} |S|D}{n_2}\Big)^{D|S|}\nonumber\\
	&\le   \sum_{s=1}^{\gamma n_1} {n_1 \choose  s}{n_2 \choose {\alpha} Ds}\Big(\frac{{\alpha} Ds}{n_2}\Big)^{Ds}\nonumber
	\end{align}
Using the bound ${n \choose k}\le (ne/k)^k$, we can write, 
\begin{align}	
	\Prob(G \text{ is not a $(\gamma n_1,\alpha)$ expander})
	&\le   \sum_{s=1}^{\gamma n_1} \Big(\frac{n_1 e}{  s}\Big)^{{\alpha} s}\Big(\frac{n_2 e}{ {\alpha} Ds}\Big)^{{\alpha} Ds}\Big(\frac{{\alpha} Ds}{n_2}\Big)^{Ds}\nonumber\\
	&\le \sum_{s=1}^{\gamma n_1} \Bigg[ \Big(\frac{n_1 e}{  s}\Big)^{{\alpha} }\Big(\frac{n_2 e}{ {\alpha} Ds}\Big)^{{\alpha} D}\Big(\frac{{\alpha} Ds}{n_2}\Big)^{D} \Bigg]^s\nonumber \le \sum_{s=1}^{\gamma n_1} x_s^s\nonumber
	\end{align}
	where $x_s= \Big(\frac{n_1 e}{  s}\Big)^{ }\Big(\frac{n_2 e}{ {\alpha} Ds}\Big)^{{\alpha} D}\Big(\frac{{\alpha} Ds}{n_2}\Big)^{D} $. $x_s$ can be bounded as follows---
	\begin{align}
	x_s &= \Big(\frac{n_1 e}{  s}\Big)\Big( \frac{{\alpha} D s e^{1/{(1-\alpha)}}}{n_2} \Big)^{{(1-\alpha)} D}\nonumber\\
	&\le \Big(\frac{e}{ \gamma }\Big)\Big( \frac{{\alpha} D \gamma n_1 e^{1/(1-\alpha)}}{n_2} \Big)^{{(1-\alpha)} D}\nonumber\\
	&\le \Big(\frac{n_1 e^{1+1/(1-\alpha)}}{n_2}\Big)D{\alpha}^{{(1-\alpha)}D }\nonumber\\
	&\le \Big(\frac{n_1 e^{6}}{n_2}\Big)De^{-D/25}
	= x\nonumber
	\end{align}
	where in the last step we set $\alpha=4/5$. Hence we can upper bound the probability of $G$ not being an expander as follows---
	\begin{align}
	\Prob(G \text{ is not a $(\gamma n_1,\alpha)$ expander}) \le \sum_{s=1}^{\infty}x^s \le  \frac{x}{1-x}\nonumber
	\end{align}
	The two parts of Lemma \ref{expander} follow by plugging in the respective values for $n_1, n_2$ and $D$.
\end{proof}
\section{Recovery Guarantees for Compressed Tensor Factorization using ALS-based Methods}
\label{sec:tensor_app}
We can prove a stronger result for symmetric, incoherent tensors and guarantee accurate recovery in the compressed space using the tensor power method. The tensor power method is the tensor analog of the matrix power method for finding eigenvectors. It is equivalent to finding a rank 1 factorization using the Alternating Least Squares (ALS) algorithm. Incoherent tensors are tensors for which the factors have small inner products with other. We define the incoherence $\mu=\max_{i\ne j}\{|A_i^TA_j|\}$. Our guarantees for tensor decomposition follow from the analysis of the tensor power method by \citet{sharan2017orthogonalized}. Proposition \ref{tensor_rec} shows guarantees for recovering one of the true factors, multiple random initializations can then be used for the tensor power method to recover back all the factors (see \citet{anandkumar2014guaranteed}).


\begin{restatable}{prop}{tensor}\label{tensor_rec}
	Consider a $n$-dimensional rank $r$ tensor $T=\sum_{i=1}^{r}w_i A_i \otimes A_i \otimes A_i$. Let $c_{\max}=\max_{i \ne j} |A_i^T A_j|$ be the incoherence between the true factors and $\gamma=\frac{w_{\max}}{w_{\min}}$ be the ratio of the largest and smallest weight. Assume $\gamma$ is a constant and $ \mu \le o(r^{-2})$. Consider a projection matrix $P\in\{0,\pm1\}^{n\times d}$ where every row has exactly $p$ non-zero entries, chosen uniformly and independently at random and the non-zero entries have uniformly and independently distributed signs. We take $d=O(r^4\log r)$ and $p=O(r^2\log r)$. Let $\tilde{A}=AP$ and $\tilde{T}$ be the $d$ dimensional projection of $T$, hence $ \tilde{T}=\sum_{i=1}^{k}w_i \tilde{A}_i \otimes \tilde{A}_i \otimes \tilde{A}_i$. Then
	 for the projected tensor decomposition problem, if the initialization $x_0\in \mathbb{R}^d$ is chosen uniformly at random from the unit sphere, then with high probability the tensor power method converges to one of the true factors of $\tilde{T}$ (say the first factor $\tilde{A}_1$) in $\Oh( r(\log r +\log \log d))$ steps, and the estimate $\tilde{A}'$ satisfies $\normsq{\tilde{A}_1 - \tilde{A}_1'} \le \Oh( r\max\{\mu^2,1/d^2\})$.
\end{restatable}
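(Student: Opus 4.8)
The plan is to reduce the statement to the known convergence analysis of the tensor power method on \emph{incoherent} symmetric tensors due to \citet{sharan2017orthogonalized}. The projected object $\tilde{T}=\sum_{i=1}^{r} w_i \tilde{A}_i \otimes \tilde{A}_i \otimes \tilde{A}_i$ is itself a symmetric rank-$r$ tensor living in $\mathbb{R}^d$, with the \emph{same} weights $w_i$ (so the weight ratio $\gamma$ is unchanged) and new factors $\tilde{A}_i = P^{\top} A_i$, the $d$-dimensional projections of the $A_i$. If I can show that with high probability the $\tilde{A}_i$ remain approximately unit-norm and have pairwise incoherence $\tilde{\mu} = \max_{i\ne j}|\tilde{A}_i^{\top} \tilde{A}_j| = o(r^{-2})$, then the hypotheses of the uncompressed analysis hold verbatim for $\tilde{T}$ in $\mathbb{R}^d$. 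Its conclusions---convergence from a random unit-sphere initialization to one of the factors in $\Oh(r(\log r + \log\log d))$ steps, with final recovery error $\Oh(r\tilde{\mu}^2)$---then transfer directly, and substituting the bound on $\tilde{\mu}$ produces the claimed error $\Oh(r\max\{\mu^2, 1/d^2\})$.

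The crux is therefore the first step: showing that the sparse sign projection preserves the geometry of the factor set $\{A_i\}$. I would first verify that norms are preserved up to constant factors, $\normsq{\tilde{A}_i} = (1\pm o(1))\normsq{A_i}$ for all $i$ simultaneously, so that normalizing the $\tilde{A}_i$ is harmless. The more delicate part is the pairwise inner products: I would bound $|\tilde{A}_i^{\top}\tilde{A}_j - A_i^{\top}A_j|$ for each of the $\binom{r}{2}$ pairs, yielding $\tilde{\mu} \le \max\{\mu,\ \text{(projection error)}\}$ where $\mu = c_{\max}$. Writing the inner-product estimate as a sum over the $d$ output coordinates, each coordinate $s$ contributes $(\sum_a P_{as}(A_i)_a)(\sum_b P_{bs}(A_j)_b)$, and the independent random signs make the off-diagonal cross terms mean-zero. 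Summing over coordinates and using the independence of the signs together with the prescribed number of nonzeros $p$ per row gives the required concentration, with the choices $d=\Oh(r^4\log r)$ and $p=\Oh(r^2\log r)$ chosen precisely so that, after a union bound over all pairs and all norms, the projection error falls below $o(r^{-2})$, driving $\tilde{\mu}=o(r^{-2})$.

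I expect the sparsity of $P$ to be the main obstacle. Standard Johnson--Lindenstrauss concentration assumes a dense sub-Gaussian map, whereas here each original coordinate is spread over only $p=\Oh(r^2\log r)$ of the $d$ output coordinates; this introduces dependencies among the coordinates of a single $\tilde{A}_i$ and makes the second- and fourth-moment estimates more involved. The core of the argument is a Hanson--Wright / higher-moment style bound for sparse sign matrices showing that $p$ is large enough that no single output coordinate absorbs too much of the mass of a (possibly heavy-tailed) factor $A_i$, so that the quadratic form concentrates around its mean. This is the genuinely new technical content of the proof.

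Once the incoherence bound $\tilde{\mu}=o(r^{-2})$ is established, the second step is essentially a citation. Feeding $\tilde{T}$ into the incoherent-tensor power-method analysis yields both the iteration count $\Oh(r(\log r + \log\log d))$---the $\log\log d$ coming from the quadratic convergence phase in $\mathbb{R}^d$ and the leading $r$ from amplifying a random initialization whose correlation with the nearest factor is only $\Oh(1/\sqrt{d})$---and the final estimate $\tilde{A}_1'$ satisfying $\normsq{\tilde{A}_1 - \tilde{A}_1'} \le \Oh(r\tilde{\mu}^2)$. Substituting the bound on $\tilde{\mu}$ then gives the stated recovery guarantee, completing the proof; recovering all $r$ factors (rather than a single one) follows from standard restart arguments as in \citet{anandkumar2014guaranteed}, though that is beyond what the proposition asserts.
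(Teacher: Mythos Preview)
Your proposal is correct and follows exactly the paper's two-step structure: (i) show the sparse sign projection preserves the pairwise incoherence of the factors so that $\tilde{\mu}=o(r^{-2})$, and (ii) invoke Theorem~3 of \citet{sharan2017orthogonalized} on $\tilde{T}$ to obtain the iteration count and error bound, with the restart remark from \citet{anandkumar2014guaranteed} appended.

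The only substantive difference is in step (i). You flag the inner-product preservation under a \emph{sparse} sign map as ``the genuinely new technical content of the proof'' and sketch a from-scratch Hanson--Wright / moment argument. The paper does not do this work: it simply cites the sparse Johnson--Lindenstrauss transform of \citet{kane2014sparser}, which already establishes that the matrix $P$ in the statement (with $d=O(\epsilon^{-2}\log(1/\delta))$ and $p=\Theta(\epsilon^{-1}\log(1/\delta))$) preserves pairwise distances up to $(1\pm\epsilon)$, hence inner products up to additive $\epsilon$. Choosing $\delta=1/r^3$ and $\epsilon=r^{-2}\log^{-1}r$ and union-bounding over the $r^2$ pairs gives $\tilde{\mu}\le \mu+\epsilon=o(r^{-2})$ with the stated $d$ and $p$. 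So your plan works, but the hard part you anticipate is already a black box in the literature; recognizing this collapses the proof to a couple of citations and parameter choices.
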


\begin{proof}
	Our proof relies on Theorem 3 of \citet{sharan2017orthogonalized} and sparse Johnson Lindenstrauss transforms due to \citet{kane2014sparser}. To show Claim 2 we need to ensure that the incoherence parameter in the projected space is small. We use the Johnson Lindenstrauss property of our projection matrix to ensure this. A matrix $M$ is regarded as a Johnson Lindenstrauss matrix if it preserves the norm of a randomly chosen unit vector $x$ up to a factor of $(1\pm\epsilon)$, with failure probabilty $\delta$: 
	\begin{align}
	\Prob_{x}[(1-\epsilon)< \|Mx\|_2^2 <(1+\epsilon)]>1-\delta. \nonumber
	\end{align}
	We use the results of \citet{kane2014sparser} who show that with high probability a matrix $P\in \{0,\pm1\}^{n\times d}$ where every row has $p$ non-zero entries, chosen uniformly and independently at random and the non-zero entries have uniformly and independently distributed signs, preserves pairwise distances to within a factor $\epsilon$ for $d=O(\epsilon^{-2}\log(1/\delta))$ and $p=\Theta(\epsilon^{-1}\log(1/\delta))$.
	
	It is easy to verify that inner-products are preserved to within an additive error $\epsilon$ if the pairwise distances are preserved to within a factors of $(1\pm\epsilon)$. By choosing $\delta=1/r^3$ and doing a union bound over all the $r^2$ pairs of factors, the factors are $(\mu\pm \epsilon)$ incoherent in the projected space with high probability if they were $\mu$ incoherent in the original space. Setting $\epsilon=r^{-2}\log^{-1}r$ ensures that $\mu+\epsilon=o(r^{-2})$. Claim 2 now again follows from Theorem 3 of \citet{sharan2017orthogonalized}.
\end{proof}

\end{document}